\definecolor{fillcolor}{RGB}{200,200,200}
\definecolor{linecolor}{RGB}{200,200,200}
\definecolor{sensitivecolor}{RGB}{200,140,140}
\definecolor{outcomecolor}{RGB}{140,140,200}
\tikzset{%
  -Latex,semithick,
  >={Latex[width=1.5mm,length=2mm]},
  obs/.style = {name = #1, rectangle, rounded corners = 1, 
    draw=black, thick, fill=fillcolor, inner sep = 5.000000pt, line width = 0.4pt},
  obssens/.style = {name = #1, rectangle, rounded corners = 1, 
    draw=black, thick, fill=sensitivecolor, inner sep = 5.000000pt, line width = 0.4pt},
  obsout/.style = {name = #1, rectangle, rounded corners = 1, 
    draw=black, thick, fill=outcomecolor, inner sep = 5.000000pt, line width = 0.4pt},
  lat/.style = {name = #1, circle, draw, inner sep = 3.000000pt}
}
\definecolor{violet}{rgb}{0.7,0,0.7}
\definecolor{gray}{rgb}{0.4,0.4,0.4}
\newcommand{\+}[1]{\ensuremath{\mathbf{#1}}}
\newcommand{\cond}{\,\vert\,}
\newcommand{\sM}{\mathcal{M}}
\newcommand{\sG}{\mathcal{G}}
\newcommand{\sC}{\mathcal{C}}
\newcommand{\sW}{\mathcal{W}}
\newcommand{\Pa}[1][]{%
  \ifthenelse{ \equal{#1}{} }
    {\mathrm{Pa}}
    {\mathrm{Pa}_{#1}}
}
\newcommand{\Ch}[1][]{%
  \ifthenelse{ \equal{#1}{} }
    {\textrm{Ch}}
    {\textrm{Ch}_{#1}}
}
\newcommand{\An}[1][]{%
  \ifthenelse{ \equal{#1}{} }
    {\textrm{An}}
    {\textrm{An}_{#1}}
}
\newcommand{\De}[1][]{%
  \ifthenelse{ \equal{#1}{} }
    {\textrm{De}}
    {\textrm{De}_{#1}}
}
\newcommand{\Ne}[1][]{%
  \ifthenelse{ \equal{#1}{} }
    {\textrm{Ne}}
    {\textrm{Ne}_{#1}}
}
\newcommand{\Co}[1][]{%
  \ifthenelse{ \equal{#1}{} }
    {\textrm{Co}}
    {\textrm{Co}_{#1}}
}
\newcommand{\rec}[1][]{%
  \ifthenelse{ \equal{#1}{} }
    {\textrm{Re}}
    {\textrm{Re}_{#1}}
}
\newcommand{\emi}[1][]{%
  \ifthenelse{ \equal{#1}{} }
    {\textrm{Em}}
    {\textrm{Em}_{#1}}
}
\newcommand{\Er}[1]{%
    {\overline{U}_{#1}}
}
\newcommand{\er}[1]{%
    {\overline{u}_{#1}}
}
\newcommand{\erp}[1]{%
    {\xi_{#1}}
}
\newcommand{\Pau}[1][]{%
  \ifthenelse{ \equal{#1}{} }
  {\mathrm{Pa}^\circ}
  {\mathrm{Pa}^\circ_{#1}}
}
\newcommand{\doo}{\mathrm{do}}
\definecolor{colA}{RGB}{241,86,63} 
\definecolor{colB}{RGB}{0,82,174} 
\definecolor{colC}{RGB}{129,103,0}
\newcommand{\citepp}[1]{\citep{#1}}
\newcommand{\citett}[1]{\citet{#1}}
\newtheorem{theorem}{Theorem}
\newtheorem{corollary}{Corollary}
\newtheorem{definition}{Definition}
\newtheorem{lemma}{Lemma}
\newcommand{\N}{\textrm{N}}
\newcommand{\errorterm}{dedicated error term}
\newcommand{\R}{\mathbb{R}}
\newcommand{\E}{\mathbb{E}}
\newcommand{\ud}{\mathrm{d}}
\renewcommand{\vec}[1]{\mathbf{#1}}
\newcommand{\uarg}{\;\cdot\;}
\newcommand{\blind}{1}
\begin{document}

\title{\vspace{-2.5cm}  Simulating counterfactuals}

\author{Juha Karvanen, Santtu Tikka, Matti Vihola\\
Department of Mathematics and Statistics\\
University of Jyvaskyla, Finland\\
}

\date{}

\maketitle

\begin{abstract}
Counterfactual inference considers a hypothetical intervention in a parallel world that shares some evidence with the factual world. If the evidence specifies a conditional distribution on a manifold, counterfactuals may be analytically intractable.
We present an algorithm for simulating values from a counterfactual distribution where conditions can be set on both discrete and continuous variables. We show that the proposed algorithm can be presented as a particle filter leading to asymptotically valid inference. The algorithm is applied to fairness analysis in credit-scoring.\\
~\\
{\it Keywords:}  Causality, Fairness, Particle filter, Sequential Monte-Carlo, Structural causal model
\end{abstract}

%
%

\section{Introduction}
A counterfactual distribution is the probability distribution of a random variable under a hypothetical scenario that differs from the observed reality. ``What would have been the outcome for this individual if they had received a different treatment?'' is an example of a counterfactual question. Here the personal data of the individual constitute the evidence that specifies the observed reality, and the interest lies in the distribution of the outcome under a hypothetical treatment. 

Counterfactual questions belong to the third and highest level in the causal hierarchy \citepp{shpitser2008complete} and are in general more difficult than associational (first level) or interventional (second level) questions.
Algorithms for checking the identifiability of counterfactual queries from observational and experimental data have been developed \citepp{shpitser2007counterfactuals,shpitser2018identification,correa2021nested}  and implemented \citepp{tikka2022identifying}. In many practical cases, the queries may be non-identifiable \citepp{wu2019pc-fairness}. 

Counterfactuals are often linked with questions about fairness, guilt, and responsibility. Notably, the fairness of prediction models has become a major concern in automated decision-making \citepp{wachter2018counterfactual,pessach2022review} where the requirement for fairness often arises directly from the legislation. A classic example of the fairness requirement in decision-making is credit-scoring where the bank's decision to lend must not depend on sensitive variables such as gender or ethnicity \citepp{bartlett2022consumer}.

Several definitions and measures of fairness have been proposed \citepp{mehrabi2021survey,caton2020fairness,carey2022causal}. Among these proposals, counterfactual definitions of fairness \citepp{kusner2017counterfactual,nabi2018fair,chiappa2019path,wu2019pc-fairness,richens2022counterfactual} are intuitively appealing as they compare whether an individual decision would remain the same in a hypothetical counterfactual world. Note that the term ``counterfactual explanations'' \citepp{guidotti2022counterfactual} is sometimes used in the literature on explainable artificial intelligence (XAI)  and interpretable machine learning \citepp{burkart2021survey} in contexts where the term ``contrastive explanations'' proposed by \citett{karimi2021algorithmic} would be more appropriate.

We consider the problem of simulating observations from a specified counterfactual distribution. Given a structural causal model (SCM) and a counterfactual of interest, the counterfactual distribution can be derived in three steps \citepp{pearl:book2009}. First, the distribution of the latent background variables is updated given the evidence expressed as a set of conditions on the observed variables. Second, the causal model is modified according to the hypothetical intervention. Third, the counterfactual distribution of the target variable is calculated in the model that represents the counterfactual scenario under the updated distribution. These three steps require the full knowledge of the causal model in the functional form, i.e. the structural equations and the distributions of the background variables must be known. 

The problem of simulating counterfactuals is similar to the problem of simulating observations from a given distribution. The challenges are related to the first step of counterfactual inference which requires determining a multivariate conditional distribution. This can be done analytically only in special cases where, for instance, all the variables are either discrete or normally distributed.  Evidence that includes continuous variables leads to a conditional distribution concentrated on a manifold, which is generally difficult to simulate. 

In this paper, we present an algorithm for simulating counterfactual distributions. The algorithm is applicable in settings where the causal model is fully known in a parametric form and the structural equations for continuous variables have an additive error term or, more generally, are strictly monotonic with respect to an error term.  The algorithm is essentially tuning-free. Unlike \citett{karimi2021algorithmic} and \citett{javaloy2023causal}, we allow the SCM to contain background variables that affect two or more observed variables, which makes it significantly harder to simulate the counterfactual distribution.

The algorithm processes the conditioning variables one by one in a topological order and obtains an approximate sample from the updated distribution of the background variables (step 1). The challenge of continuous conditioning variables is overcome by using binary search to find solutions that satisfy the conditions and then applying sequential Monte Carlo to calibrate the distribution of these solutions. Discrete conditioning variables are processed by resampling observations that satisfy the condition. Next, the causal model is  modified (step 2) and a sample from the counterfactual distribution is acquired by simulating the modified causal model with the obtained sample of the background variables (step 3). 

We show that the conditional simulation at step 1 can be interpreted as a particle filter/sequential Monte Carlo \citep[e.g.,][]{gordon-salmond-smith,doucet2000sequential,del-moral,cappe-moulines-ryden,chopin2020introduction}.  
Theoretical results from the particle filter literature guarantee good asymptotic properties of the sample. In particular, we state a mean square error convergence rate and a central limit theorem for samples obtained with the proposed algorithm.

In real-world applications, the full knowledge of the causal model may not be available. Despite this serious restriction, we argue that the simulation-based counterfactual inference may still have its role in fairness evaluation. Consider a prediction model that is used for decision-making in a situation where the underlying causal model is unknown. To ensure fairness in this situation, the prediction model should be fair under any reasonable causal model. This implies that an analyst evaluating the fairness of the prediction model could choose some reasonable causal models for the evaluation. Deviations from fairness under any of these causal models indicate that the prediction model is not fair in general. 

We use the counterfactual simulation algorithm as the main component in a fairness evaluation algorithm of prediction models. We simulate data from specified counterfactual distributions and compare the predictions for these settings.  We demonstrate with a credit-scoring example how the fairness evaluation algorithm can be applied to opaque AI models 
without access to real data. As a result, we find out if an AI model is fair in the tested setting, and if not, learn how large differences there are in the outcome under different counterfactual interventions.

The rest of the paper is organized as follows. The notation and the basic definitions are given in Section~\ref{sec:notation}. In Section~\ref{sec:algorithm}, the counterfactual simulation algorithm and the fairness evaluation algorithm are introduced, and the conditional simulation is presented as a particle filter. In Section~\ref{sec:simulations}, the performance of the simulation algorithm is tested in benchmark cases where the counterfactual distributions can be derived analytically. In Section~\ref{sec:fairness}, the fairness evaluation algorithm is applied to a credit-scoring example. Section~\ref{sec:discussion} concludes the paper.

\section{Notation and Definitions} \label{sec:notation}

We use uppercase letters to denote variables, lowercase letters to denote values, and bold letters to denote sets of variables or values. The primary object of counterfactual inference is a causal model \citepp{pearl:book2009}:

\begin{definition}[SCM] \label{def:scm}
A structural causal model $\sM$ is a tuple $(\+ V, \+U, \+ F, p(\+ u))$, where 
\begin{enumerate}[(1) ]
  \item $\+ V$ is a set of observed (endogenous) variables that are determined by other variables in the model.
  \item $\+ U$ is a set of background variables that are unobserved random variables.
  \item $\+ F$ is a set of functions $\{f_V \mid V \in \+ V\}$ such that each $f_V$ is a mapping from \mbox{$\+ U \cup (\+ V \setminus \{V\})$} to $V$ and such that $\+ F$ forms a mapping from $\+ U$ to $\+ V$. Symbolically, the set of equations $\+ F$ can be represented by writing $V = f_V(\Pa^*(V))$, where $\Pa^*(V) \subset \+ U \cup (\+ V \setminus \{V\})$ is the unique minimal set of variables sufficient for representing $f_V$. 
  \item $p(\+ u)$ is the joint probability distribution of $\+ U$.
\end{enumerate}
\end{definition}

An SCM is associated with a directed graph $\sG$ where there is an edge $W \rightarrow V$ if and only if $W \in \Pa^*(V)$. In other words, $\Pa^*(V)$ denotes the (observed and unobserved) parents of $V$ in graph $\sG$. We will consider only recursive SCMs where the set $\+F$ defines a \emph{topological order} of the variables $\+V \cup \+U$, i.e., an order where $\Pa^*(Z) < Z$ for all $Z \in \+ V \cup \+ U$. The graph associated with a recursive SCM is a directed acyclic graph (DAG). Notation $\tau(V)$ refers to the variables that precede $V$ in the topological order meaning variables $W \in \+ V \cup \+ U$ such that $W < V$. We assume that variables $\+U$ precede $\+V$ in the topological order. Notation $\Pa(V)$ is a shorthand notation for $\Pa^*(V) \cap \+V$ meaning the observed parents. Similarly, $\An^*(V)$ denotes the ancestors of $V$ in $\sG$ and $\An(V) = \An^*(V) \cap \+V$. Notations $\tau(v)$, $\Pa^*(v)$, $\Pa(v)$, $\An^*(v)$ and $\An(v)$ refer to the values of the variables $\tau(V)$, $\Pa^*(V)$, $\Pa(V)$, $\An^*(V)$ and $\An(V)$, respectively.

Data can be simulated from an SCM by generating the values $\+ u$ of the background variables $\+ U$ from the distribution $p(\+u)$ and then applying the functions $\+F$ in the topological order to obtain values $\+ v$ of $\+ V$. For a data matrix $\+D$ with named columns (variables), the following notation is used: $\+D[i,]$ refers to the $i$th row, $\+D[C=c,]$ where $C$ is a column name of $\+D$ refers to rows where condition $C=c$ is fulfilled, $\+D[,\+S]$ refers to data on variables $\+S$, i.e., columns whose variable names are in the set $\+S$, and $\+D[i,\+S]$ refers to the $i$th row of variables $\+S$. The notation is similar to many high-level programming languages.

An intervention $\doo(\+ X = \+ x)$ targeting SCM $\sM$ induces a \emph{submodel} \citepp{pearl:book2009}, denoted by $\sM_{\doo(\+ X = x)}$, where those functions in $\sM$ that determine the variables $\+X$ are replaced by constant functions that output the corresponding values $\+ x$. We also use the subscript $\doo(\+ X = x)$ to distinguish variables in the submodel from the original variables, e.g., $\+ W_{\doo(\+ X = \+ x)}$ is the set of variables $\+ W$ in $\sM_{\doo(\+ X = x)}$. The joint distribution of $\+ V_{\doo(\+ X = x)}$ in the submodel $\sM_{\doo(\+ X = x)}$ is known as the \emph{interventional distribution} or \emph{causal effect}. The effects of interventions typically relate to interventional considerations such as the effect of a treatment on a response, but they also facilitate the analysis of counterfactuals that consider hypothetical actions that run contrary to what was observed in reality. 

For an SCM where all variables are discrete, a counterfactual distribution can be evaluated by marginalizing over those values of the background variables that result in the specified evidence. More specifically, the probability distribution of a set of observed variables $\+ W \subset \+ V \setminus \+ X$ in the submodel $\sM_{\doo(\+ X = \+ x)}$ conditional on a set of observations (the evidence) $\+ C = \+ c$ such that $\+ C, \+ X \subseteq \+ V$ can be written as
\[
  p(\+ W_{\doo(\+ X = \+ x)} = \+ w \cond \+ C = \+ c) = \sum_{\{\+ u \cond f^\circ_{\+ W}(\+ u) = \+ w\}} p(\+ U = \+u | \+ C = \+ c),
\]
where $f^\circ_{\+ W}$ denotes the functions of $\+ W_{\doo(\+ X = \+ x)}$ in the submodel $\sM_{\doo(\+ X = \+ x)}$ expressed in terms of $\+ U$ (such functions always exist due to property~(3) of Definition~\ref{def:scm}). The distribution can be also expressed as
\[
  p(\+ W_{\doo(\+ X = \+ x)} = \+ w \cond \+ C = \+ c) = \sum_{\+ u} I(f^\circ_{\+ W}(\+ u) = \+ w) p(\+ U = \+u | \+ C = \+ c).
\]
This formulation generalizes to the scenario where the variables $\+ U$ are continuous as follows
\[
  p(\+ W_{\doo(\+ X = \+ x)} \in A \cond \+ C = \+ c) = \int I(f^\circ_{\+ W}(\+ u) \in A) p(\+ U = \+u | \+ C = \+ c)\, \ud\+{u}.
\]
where $A$ is some event.

The submodel $\sM_{\doo(\+ X = \+ x)}$ and the updated distribution $p(\+ U = \+u \cond \+C = \+c)$ together define a \emph{parallel world} where the state of the background variables is shared with the non-interventional world. In contrast to interventional distributions, the sets of variables $\+ C$ and $\+ X$ defining the evidence $\+ C = \+ c$ and the intervention $\doo(\+ X = \+ x)$ need not be disjoint for a counterfactual distribution. If the sets are disjoint, a counterfactual distribution simply reduces to a conditional interventional distribution.

We follow the general procedure introduced by \citett{pearl:book2009} to evaluate counterfactual distributions.
\begin{definition}[Evaluation of a counterfactual distribution] \label{def:counterfactual}
Let $\sM = (\+ V, \+ U, \+ F, p(\+ u))$ be a recursive SCM and let $\+ C, \+ X \subseteq \+ V$ and $\+ W \subseteq \+ V \setminus \+ X$. The counterfactual distribution $p(\+ W_{\doo(\+ X = \+ x)} = \+ w \cond \+ C = \+ c)$ can be evaluated using the following three steps.
\begin{enumerate}[1. ]
  \item Update $p(\+ U = \+ u)$ by the evidence $\+ C = \+c$ to obtain $p(\+ U = \+ u \cond \+ C = \+ c)$ (Bayes' Theorem)
  \item Construct the submodel $\sM_{\doo(\+ X = \+ x)}$ corresponding to the counterfactual scenario.
  \item Use the submodel $\sM_{\doo(\+ X = \+ x)}$ and the updated distribution $p(\+ U = \+ u \cond \+ C = \+ c)$ to compute $p(\+ W_{\doo(\+ X = \+ x)} = \+ w \cond \+ C = \+ c)$.
\end{enumerate}
\end{definition}
In practice, however, this procedure may not be directly applicable because it is often not possible to analytically compute $p(\+ W_{\doo(\+ X = \+ x)} = \+ w \cond \+ C = \+ c)$ in the third step, for example when some of the variables in the conditioning set $\+C$ are continuous. A simple illustration of counterfactual inference in a case where the distribution $p(\+ W_{\doo(\+ X = \+ x)} = \+ w \cond \+ C = \+ c)$ can be derived analytically is presented in Online Appendix 1.

Our objective is to simulate observations from counterfactual distributions given an SCM in the general case where an analytical solution is not available. Before the evaluation of a counterfactual distribution, the SCM can be pruned similarly to what is done by causal effect identification algorithms \citepp{tikka2017pruning} by restricting our attention to only those variables that are relevant to the task. For this purpose, we define an ancestral SCM as follows:

\begin{definition}[Ancestral SCM] \label{def:ancestral}
Let $\sM = (\+ V, \+ U, \+ F, p(\+ u))$ be a recursive SCM and let $\+ Z \subseteq \+ V \cup \+ U$. Then $\sM_{\+ Z} = (\+ V', \+ U', \+ F', p(\+ u'))$ is the ancestral SCM of $\sM$ with respect to $\+ Z$ where $\+V'=\+V \cap (\An(\+Z) \cup \+Z)$, $\+U'=\+U \cap (\An^*(\+Z) \cup \+Z)$, and $\+F'$ is the subset of $\+F$ that contains the functions for $\+V'$.
\end{definition}
To evaluate a counterfactual distribution, it suffices to restrict our attention to the relevant ancestral SCM.
\begin{theorem} \label{th:pruning_equivalence}
Let $\sM = (\+ V, \+ U, \+ F, p(\+ u))$ be a recursive SCM and let $\eta = p(\+ W_{\doo(\+ X = \+ x)} = \+ w \cond \+ C = \+ c)$ be a counterfactual distribution such that $\+ W \cup \+ X \cup \+ C \subseteq \+ V$. Then $\eta$ evaluated via Definition~\ref{def:counterfactual} in $\sM$ is equivalent to $\eta$ evaluated via Definition~\ref{def:counterfactual} in the ancestral SCM $\sM_{\+ W \cup \+ X \cup \+ C}$.
\end{theorem}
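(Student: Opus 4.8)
The plan is to work directly with the integral representation of the counterfactual distribution recorded in the excerpt,
\[
  p(\+ W_{\doo(\+ X = \+ x)} \in A \cond \+ C = \+ c) = \int I(f^\circ_{\+ W}(\+ u) \in A)\, p(\+ U = \+u \cond \+ C = \+ c)\, \ud\+{u},
\]
and to show that every ingredient on the right depends on $\+u$ only through its coordinates in $\+ U' = \+U \cap (\An^*(\+Z) \cup \+Z)$, where $\+Z = \+W \cup \+X \cup \+C$. Note $\+Z \subseteq \+V$, so $\+U' = \An^*(\+Z) \cap \+U$. Once this is established, the integral collapses to one over $\+U'$ against the marginal posterior, which I will then identify with the same query evaluated in $\sM_{\+Z}$.

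First I would record two structural facts about recursive SCMs. \emph{(a)} For any $V \in \+V$, the value $f^\circ_V(\+u)$ produced by applying $\+F$ in topological order depends only on the coordinates of $\+u$ lying in $\An^*(V) \cap \+U$; this follows by induction along the topological order, since each $f_V$ reads only its parents $\Pa^*(V)$. \emph{(b)} Passing to the submodel $\sM_{\doo(\+ X = \+ x)}$ only replaces the functions of $\+X$ by constants, hence can only remove ancestors; therefore the background variables feeding $f^\circ_{\+W}$ in the submodel are contained in $\An^*(\+W) \cap \+U$, which is a subset of $\An^*(\+Z) \cap \+U = \+U'$ because $\+W \subseteq \+Z$. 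Applying \emph{(a)} to $\+C$ and using $\+C \subseteq \+Z$ likewise shows that the conditional law of $\+C$ given $\+U$ depends on $\+u$ only through $\+U'$.

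Next I would combine these observations through Bayes' theorem. Writing $p(\+u \cond \+c) \propto p(\+c \cond \+u)\, p(\+u)$ and integrating out the nuisance coordinates $\+U \setminus \+U'$, both the indicator $I(f^\circ_{\+W}(\+u) \in A)$ and the likelihood survive the marginalization unchanged, so the integral equals $\int I(f^\circ_{\+W}(\+u') \in A)\, p(\+U' = \+u' \cond \+C = \+c)\, \ud\+u'$, with $p(\+U' = \+u')$ the marginal prior of $\+U'$ under $p(\+u)$. This is precisely the integral representation of the counterfactual query computed inside $\sM_{\+Z}$, provided the reduced model reproduces the same objects: $\sM_{\+Z}$ retains the original functions $f_V$ for every $V \in \+V'$ together with all their parents (a parent of a retained variable is again an ancestor of $\+Z$, hence retained), so forming its submodel and expressing $f^\circ_{\+W}$ in terms of $\+U'$ yields the same map, and by definition its background law is the marginal $p(\+u')$.

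The main obstacle I anticipate is the bookkeeping in the last paragraph: one must verify that ancestry, parent sets, and the ``express in terms of $\+U$'' reduction are all preserved under passage to $\sM_{\+Z}$, and in particular that $\+W, \+X, \+C \subseteq \+V'$ so that the query is well defined there (this holds since $\+Z \subseteq \An(\+Z) \cup \+Z$). A secondary point is that the argument should be phrased for a general event $A$ so that the discrete-sum and continuous-integral versions in the excerpt are covered uniformly; since $\+C$ may contain continuous coordinates and $\+C = \+c$ can be a measure-zero event, the density-level manipulation of $p(\+c \cond \+u)$ is only heuristic, and I would therefore state the reduction at the level of the conditional distribution of $\+U'$ rather than manipulating conditional densities directly.
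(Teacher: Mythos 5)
Your proposal is correct and is essentially a fully fleshed-out version of the paper's own one-sentence argument, which simply asserts that any variable outside $\An^*(\+W \cup \+X \cup \+C)$ has no impact on steps 1--3 of Definition~\ref{def:counterfactual}. Your added detail (the induction showing $f^\circ_{\+W}$ and the likelihood of $\+C$ depend only on the ancestral background coordinates, the marginalization through Bayes, and the caveat about measure-zero conditioning) supplies exactly the bookkeeping the paper leaves implicit.
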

\begin{proof}
Consider a variable $Y \notin \An^*(\+W \cup \+X \cup \+C)$. As $Y$ is not an ancestor of $\+W$, $\+X$, or $\+C$, it does not have any impact in steps 1--3 of Definition~\ref{def:counterfactual}.
\end{proof}
This type of pruning is a useful operation because it often reduces the number of variables in the SCM and thus allows for more efficient simulation. 

The main application of counterfactual simulation is counterfactual fairness for which different definitions have been proposed. Let $S$ be a sensitive variable (sometimes referred to as a protected variable), $Y$ the outcome of interest, and $\widehat{Y}$ an estimator of $Y$. \citett{kusner2017counterfactual} define $\widehat{Y}$ to be counterfactually fair if
\begin{equation} \label{eq:kusnerfairness}
p(\widehat{Y}_{\doo(S=s)} \cond \+X=\+x,S=s) = p(\widehat{Y}_{\doo(S=s')} \cond \+X=\+x,S=s)
\end{equation}
under any context $\+X=\+x$ and for any values $s$ and $s'$ of $S$. In other words, the evidence $\+X=\+x$ and $S=s$ has been observed and in this situation, the distribution of $\widehat{Y}$ should be the same under the intervention $\doo(S=s)$ and the counterfactual intervention $\doo(S=s')$.

Other authors \citepp{nabi2018fair,chiappa2019path} have found the definition of Equation~\eqref{eq:kusnerfairness} too restrictive and have instead considered path-specific counterfactual inference where the sensitive variable may affect the decision via a fair pathway. For instance, even if gender affects education, it would be fair to use education in a recruitment decision. However, it would be discriminatory to base the recruitment decision on gender or its proxies such as the given name. The recognition of fair and unfair pathways requires an understanding of the underlying causal mechanisms. Along these lines, we will use the following definition.

\begin{definition}[Counterfactual fairness] \label{def:fairness}
Let $(\+V,\+U,\+F,p(\+u))$ be an SCM where $\+S \subset \+V$ is the set of sensitive variables and $\+Y \subset \+V$ is the set of outcome variables. Define $\+W = \Pa(\+Y) \setminus \+S$ and $\+Z=\+V \setminus (\+Y \cup \+W \cup \+S)$, and let the conditioning variables be $\+ C = \+ W \cup \+ Z \cup \+ S$. The decisions based on an estimator $\widehat{\+Y}$ are counterfactually fair if
\[
p(\widehat{\+Y}_{\doo(\+S=\+s, \+W=\+w)} \cond \+W=\+w,\+Z=\+z,\+S=\+s) = p(\widehat{\+Y}_{\doo(\+S=\+s', \+W=\+w)} \cond \+W=\+w,\+Z=\+z,\+S=\+s)
\]
under any context $\+W=\+w$, $\+Z=\+z$, and for any values $\+s$ and $\+s'$ of $\+S$.
\end{definition}
Definition~\ref{def:fairness} fixes the non-sensitive parents $\+W$ of the outcome to their observed values under any counterfactual intervention of the sensitive variables. In the recruitment example, this would mean that a counterfactual intervention on gender could change the given name, but education would remain unchanged because it has a direct effect on job performance. This kind of definition is mentioned by \citett{wu2019pc-fairness} who call it ``individual indirect discrimination''.

\section{Algorithms} \label{sec:algorithm}

We proceed to construct an algorithmic framework for counterfactual simulation and fairness evaluation. The general principle of the counterfactual simulation is straightforward: the proposed simulation algorithm randomly generates the values of some background variables and numerically solves the rest of the background variables from the conditions. The obtained candidate observations are iteratively resampled with unequal probabilities so that the selected sample is an approximate random sample from the conditional distribution of the background variables. The counterfactual distribution is approximated using this sample of background variables together with the intervened SCM.

This section is organized as follows. First, the assumptions and some basic results are presented in Section~\ref{subsec:assumptions}.  
Next, the formal algorithms are presented in Sections~\ref{subsec:conditionalalgorithms} and \ref{subsec:inferencealgorithms}. Finally, in Section~\ref{subsec:theoretical} we show that the simulation algorithm leads to asymptotically consistent inference and can be interpreted as a sequential Monte Carlo algorithm.  

\subsection{Assumptions and Basic Results} \label{subsec:assumptions}

The starting point of counterfactual simulation is an SCM $\sM=(\+ V, \+U, \+ F, p(\+ u))$ where the functions $\+F$ and the distribution $p(\+u)$ are fully known, and the objective is to simulate values from counterfactual distribution $p(\+ W_{\doo(\+ X = \+ x)} = \+ w \cond \+ C = \+ c)$.
In order to simulate counterfactuals in cases where the set of conditioning variables $\+C$ contains continuous variables, we refine the definition of the SCM by imposing some restrictions on background variables. First, we define a special type of background variable:

\begin{definition}[Dedicated error term]
In an SCM $\sM=(\+V,\+U,\+F,p(\+u))$, a background variable $U \in \+U$ is a \errorterm{} if it is independent of the other background variables and has exactly one child. If an observed variable $V \in \+ V$ has only one \errorterm{}, it is denoted by $\Er{V}$, and the notation $\erp{V}(u)=p_{\Er{V}}(u)$ is used for the density function of $\Er{V}$.
\end{definition} 
With this definition, the background variables can be divided into two categories, $\overline{\+U}$ and $\widetilde{\+U}$, where $\overline{\+U}$ is the set of \errorterm{}s and $\widetilde{\+U}$ is the set of other background variables which we call global background variables henceforth, and $\+U = \overline{\+U} \cup \widetilde{\+U}$. Further, let $\overline{\+U}_{\+C}$ and $\overline{\+U}_{\+V \setminus \+C}$ represent the \errorterm{}s of variables in $\+C$ and in $\+V \setminus \+C$, respectively. From the independence of \errorterm{}s, it follows that 
\[
  p(\overline{\+u}) = \prod_{U \in \overline{\+U}} p_{U}(u).
\]
The variables $\widetilde{\+U}$ act as unobserved confounders while each \errorterm{} only affects the variation of one specific observed variable. We will often consider the  \errorterm{}s separately from other parents of a variable and for this purpose, we introduce the notation $\Pau(V)=\Pa^*(V) \cap (\+V \cup \widetilde{\+U}) =  \Pa^*(V) \setminus \{\Er{V}\}$.

Our SCMs of interest have \errorterm{}s with a monotonic effect in their respective functions.
\begin{definition}[u-monotonic SCM] \label{def:u-monotonic}
A recursive SCM $\sM=(\+V,\+U,\+F,p(\+u))$ is u-monotonic with respect to a set of continuous variables $\+W \subseteq \+V$ if it has the following properties:  
\begin{enumerate}[(a) ]
\item Each $V \in \+V$ is univariate and has exactly one \errorterm{} that is a continuous univariate random variable. 
\label{assump:one_errorterm}
\item For all  $W \in \+W$, it holds that  given $\Pau(W)$, the value of $W$ is determined as $w=g_W(\er{w}) = g_W(\er{w};\Pau(w))=f_W(\er{w},\Pau(w))$ where  $g_W$ is a continuous, differentiable and strictly increasing function of \errorterm{} $\er{w}$, and the parameters of the function are $\Pau(W)$. \label{assump:continuous_monotonic} 
\end{enumerate}
\end{definition}
The inverses (in their domain) and derivatives of the function $g_W$ exist and they are denoted by $g_W^{-1}$ and $g_W'$, respectively. Cases where  $g_W$ would be a strictly decreasing function can be accommodated to Definition \ref{def:u-monotonic} by redefining $\overline{u}_w^{\textrm{new}}=-\er{w}$ and $g_W^{\textrm{new}}(u)=g_W(-u)$. The second property of Definition~\ref{def:u-monotonic} is trivially fulfilled in the important special case, the additive noise model 
\[
  w = f_W(\er{w},\Pau(w)) = f_W^*(\Pau(w)) + \er{w},
\]
where $f_W^*$ is some function that does not depend on $\er{w}$. 
The formulation of Definition~\ref{def:u-monotonic} permits $g_W(\er{w};\Pau(w))$ to be a complicated function, which allows for heteroscedastic noise models. 

The first step of Definition~\ref{def:counterfactual} involves deriving the conditional distribution \mbox{$p(\+U=\+u \cond \+C=\+c)$}. The following results characterize conditional distributions in a u-monotonic SCM.

\begin{lemma} \label{lem:dedicated-transformation_new}
Let $W$ be a continuous variable in a u-monotonic SCM with respect to a set $\+W$ such that $W \in \+W$. The conditional density of $W \cond (\Pau(W)=\Pau(w))$ is
\[
  p\big(w\cond \Pau(w)\big) 
  = \begin{cases}
   \displaystyle
    \frac{\erp{W}(\er{W=w})}{g_W'(\er{W=w}; \Pau(w))}, & \text{if }a < w < b, \\
    0, & \text{otherwise},
  \end{cases}
\]
where $(a,b)$ is the domain of $g_{W}(\uarg; \Pau(w))$ and $\er{W=w} = g_{W}^{-1}(w; \Pau(w))$. The notation $\omega_W(\er{W=w})=p\big(w\cond \Pau(w)\big)$ is used to assign a weight for $\er{W=w}$. 
\end{lemma}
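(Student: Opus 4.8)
The plan is to reduce the statement to a one-dimensional change-of-variables formula applied to the strictly increasing map $g_W(\uarg; \Pau(w))$. The crucial preliminary observation is that conditioning on $\Pau(W)=\Pau(w)$ does not alter the distribution of the dedicated error term $\Er{W}$, which remains $\erp{W}$. To establish this I would invoke the definition of a dedicated error term, namely that $\Er{W}$ is independent of all other background variables and has $W$ as its only child. Because the SCM is recursive and $\widetilde{\+U}$ precedes $\+V$ in the topological order, every variable in $\Pau(W)=\Pa^*(W)\setminus\{\Er{W}\}$ is either a global background variable or an observed variable preceding $W$, and in both cases it is a deterministic function of background variables other than $\Er{W}$ (the single-child property ensures $\Er{W}$ enters the structural equations only through $W$). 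Consequently $\Er{W}$ is independent of $\Pau(W)$, so conditionally on $\Pau(W)=\Pau(w)$ the error term still has density $\erp{W}$.

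Given this, the second step is a direct application of the transformation theorem. Under property~(b) of Definition~\ref{def:u-monotonic}, conditional on $\Pau(W)=\Pau(w)$ we have $W = g_W(\Er{W}; \Pau(w))$ with $g_W$ continuous, differentiable and strictly increasing in its first argument; hence it is a bijection from the domain of $\Er{W}$ onto its image $(a,b)$ with a well-defined inverse $g_W^{-1}$. For $a<w<b$ I would set $\er{W=w}=g_W^{-1}(w;\Pau(w))$ and apply the standard density-of-a-monotone-transformation formula,
\[
  p\big(w \cond \Pau(w)\big) = \erp{W}(\er{W=w}) \left| \frac{\ud}{\ud w} g_W^{-1}(w;\Pau(w)) \right|.
\]
By the inverse function theorem the derivative of $g_W^{-1}$ equals $1/g_W'(\er{W=w};\Pau(w))$, and since $g_W$ is strictly increasing this quantity is positive, so the absolute value may be dropped, yielding $\erp{W}(\er{W=w})/g_W'(\er{W=w};\Pau(w))$. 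Outside $(a,b)$ the value $w$ is not attained by $g_W(\uarg;\Pau(w))$, so the density is zero there.

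I expect the main (and essentially only) obstacle to be the conditional independence argument of the first paragraph: one must verify that conditioning on the parent values $\Pau(w)$ --- which may themselves lie downstream of the shared global background variables $\widetilde{\+U}$ --- carries no information about $\Er{W}$. This is precisely where the single-child property of the dedicated error term is needed. The remaining change-of-variables computation is routine, with the only care being to record the image $(a,b)$ of $g_W(\uarg;\Pau(w))$ as the support and to confirm that strict monotonicity makes the Jacobian factor positive, so that no absolute value survives in the final expression.
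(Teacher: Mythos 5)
Your proposal is correct and follows essentially the same route as the paper's own proof: establish that $\Er{W}$ remains distributed as $\erp{W}$ conditionally on $\Pau(W)$ (the paper states this simply "by independence," which you usefully unpack via the single-child property), and then apply the standard one-dimensional transformation formula for the strictly increasing map $g_W$. No gaps.
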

\begin{proof}
The conditional density of $\Er{W}\mid \Pau(W)$ is, by independence, $\erp{W}$, the unconditional density of $\Er{W}$. The conditional distribution $W \mid \Pau(W)$ is a point mass at $g_W(\Er{W}; \Pau(W))$.
Therefore, by the standard transformation formula, $W$ has the density
\[
  p_W(g^{-1}(w)) (g^{-1})'(w) = \frac{\erp{W}(g^{-1}(w))}{g'(g^{-1}(w))},
\]
for all $w$ in its domain, and zero elsewhere.
\end{proof}

Lemma~\ref{lem:dedicated-transformation_new} tells that the value of the \errorterm{} $\er{W=w}$ is determined by the value $w$ and the values of other parents of $W$ via the function $g_{W}^{-1}(w; \Pau(w))$. The parents $\Pau(w)$ may include global background variables $\widetilde{\vec{U}}$. When the parents $\Pau(w)$ are fixed, the distribution of $W$ is obtained from the distribution $\er{W=w}$ by the standard transformation formula. 
Note that for an additive noise model, we simply have $\omega_W(\er{W=w})=\erp{W}(\er{W=w})$. The next lemma describes the conditional distribution of the background variables.

\begin{lemma} \label{lem:unobserved-given-evidence_new}
In a u-monotonic SCM, the conditional density of $\widetilde{\vec{U}},\vec{\+U}_{\+V \setminus \+C} \cond (\+C = \+c)$ satisfies:
\[
  p(\widetilde{\vec{u}}, \vec{u}_{\+V \setminus \+C} \cond \+c) \propto 
    p(\widetilde{\vec{u}}) \left(\prod_{U \in \overline{\+U}_{\+V \setminus \+C}} p(u)\right)\left(\prod_{C \in \+C}  \omega_C(\er{C=c}) \right),
\]
where terms $ \omega_C(\er{C=c})=p\big(c\cond \Pau(c)\big)$ in the last product are defined in Lemma \ref{lem:dedicated-transformation_new}. 
\end{lemma}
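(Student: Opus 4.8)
The plan is to compute the joint density of the \emph{free} background variables together with the conditioning values $\+c$, and then read off the conditional density by Bayes' rule. Write the background variables as the triple $(\widetilde{\+U}, \overline{\+U}_{\+V \setminus \+C}, \overline{\+U}_{\+C})$. Since the \errorterm{}s are mutually independent and independent of $\widetilde{\+U}$, the joint prior of the free variables factorizes as $p(\widetilde{\+u}, \overline{\+u}_{\+V \setminus \+C}) = p(\widetilde{\+u}) \prod_{U \in \overline{\+U}_{\+V \setminus \+C}} p(u)$. The target conditional density of $(\widetilde{\+U}, \overline{\+U}_{\+V \setminus \+C})$ given $\+C = \+c$ is proportional, in its arguments, to the joint density $p(\widetilde{\+u}, \overline{\+u}_{\+V \setminus \+C}, \+c)$ with normalizing constant $1/p(\+c)$; so I would factor this joint density as the prior above times the likelihood $p(\+c \cond \widetilde{\+u}, \overline{\+u}_{\+V \setminus \+C})$.

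Next I would evaluate the likelihood by the chain rule, processing the variables of $\+C$ in topological order. The key structural observation is that, once $\widetilde{\+u}$ and $\overline{\+u}_{\+V \setminus \+C}$ are fixed, every observed variable preceding a given $C \in \+C$ is a deterministic function of these fixed quantities and of the preceding conditioning values; in particular $\Pau(C)$ is determined by the ``history'' of the chain-rule expansion. Moreover, because $\Er{C}$ is a \errorterm{} with $C$ as its unique child, it is independent of everything appearing in that history, so conditionally on the history it retains its marginal density $\erp{C}$. Applying Lemma~\ref{lem:dedicated-transformation_new} to each factor then yields $p(c \cond \mathrm{history}) = p(c \cond \Pau(c)) = \omega_C(\er{C=c})$, and hence $p(\+c \cond \widetilde{\+u}, \overline{\+u}_{\+V \setminus \+C}) = \prod_{C \in \+C} \omega_C(\er{C=c})$.

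Combining the two displays gives $p(\widetilde{\+u}, \overline{\+u}_{\+V \setminus \+C}, \+c) = p(\widetilde{\+u}) \big(\prod_{U \in \overline{\+U}_{\+V \setminus \+C}} p(u)\big)\big(\prod_{C \in \+C} \omega_C(\er{C=c})\big)$, and dividing by the constant $p(\+c)$ produces the claimed proportionality. Equivalently, one can argue by a single change of variables that replaces $\overline{\+U}_{\+C}$ with $\+C$ while holding $(\widetilde{\+U}, \overline{\+U}_{\+V \setminus \+C})$ fixed: the topological ordering makes the associated Jacobian triangular with diagonal entries $g_C'(\er{C=c}; \Pau(c))$, so its determinant contributes exactly the factors $1/g_C'$ that convert each $\erp{C}$ into $\omega_C$ through Lemma~\ref{lem:dedicated-transformation_new}.

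I expect the main obstacle to be the careful justification that conditioning on the entire history of the chain-rule expansion collapses to conditioning on $\Pau(C)$ alone. This is precisely where the recursive (topological) structure of the SCM and the two defining properties of a \errorterm{}, namely independence and a single child, must be used in tandem; the same two facts guarantee that the Jacobian is triangular in the change-of-variables view, so that the routine transformation calculus of Lemma~\ref{lem:dedicated-transformation_new} carries over factor by factor.
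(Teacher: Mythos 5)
Your proposal is correct and follows essentially the same route as the paper: both expand the joint density of $(\widetilde{\+U}, \overline{\+U}_{\+V \setminus \+C}, \+C)$ by the chain rule in topological order, reduce the factors for $\overline{\+U}_{\+V \setminus \+C}$ to marginals via mutual independence of the dedicated error terms, and identify each conditional factor $p(c \cond \mathrm{history})$ with $\omega_C(\er{C=c})$ via Lemma~\ref{lem:dedicated-transformation_new}. Your additional change-of-variables remark with the triangular Jacobian is a valid alternative view but not needed; the paper stops at the chain-rule argument.
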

\begin{proof}
The expression on the right is the joint density of $(\widetilde{\+U}, \overline{\+U}_{\+V \setminus \+C}, \+C)$, written by the chain rule:
\[
  p(\widetilde{\vec{u}}) \left(\prod_{U \in \overline{\+U}_{\+V \setminus \+C}} p(u \cond \widetilde{\vec{u}}, \tau(u) \cap \+u_{\+V \setminus \+C}) \right)
  \left(\prod_{C\in \+C} p(c \cond \widetilde{\vec{u}}, \overline{\+u}_{\+V \setminus \+C}, \tau(c) \cap \+c)\right).
\]
The claim follows by applying the mutual independence of \errorterm{}s to the first product and Lemma \ref{lem:dedicated-transformation_new} to the second product.
\end{proof}

Combining the results above, the conditional distribution for all background variables can be written as follows:
\begin{corollary} \label{cor:all-unobserved_new}
The conditional distribution of $(\widetilde{\vec{U}},\overline{\+U},\+Y) \cond (\+C=\+c)$, where $\+Y=\+V \setminus \+C$, is:
\[
  p(\widetilde{\vec{u}}, \overline{\+u}_{\+V \setminus \+C} \cond \+c) \,\ud \widetilde{\vec{u}} \,\ud \overline{\+u}_{\+V \setminus \+C}
  \left(\prod_{C \in \+C} I(g^{-1}_{C}(c;\Pau(c)) \in \ud u_{c} )\right)\!
  \left(\prod_{Y \in \+Y} I( g_{Y}(u_{Y};\Pau(y)) \in \ud y)\right),
\]
where $I(\cdot)$ is an indicator function. In other words, $\widetilde{\vec{u}}$ and $\overline{\+u}_{\+V \setminus \+C}$ have a density, and $\overline{\+u}_{\+C}$ and $\+y$ depend on them deterministically as above.
\end{corollary}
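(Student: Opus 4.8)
The plan is to read the corollary as a bookkeeping extension of Lemma~\ref{lem:unobserved-given-evidence_new}: that lemma already supplies the density of the ``free'' coordinates $(\widetilde{\+U}, \overline{\+U}_{\+V \setminus \+C})$ given $\+C = \+c$, and what remains is to show that, once these coordinates and the evidence $\+c$ are fixed, the error terms $\overline{\+U}_{\+C}$ of the conditioning variables and the values $\+Y = \+V \setminus \+C$ are determined deterministically, producing exactly the indicator (Dirac) factors in the displayed expression. First I would record that the full background vector $(\widetilde{\+U}, \overline{\+U})$ splits as $\widetilde{\+U} \cup \overline{\+U}_{\+C} \cup \overline{\+U}_{\+V\setminus\+C}$, that Lemma~\ref{lem:unobserved-given-evidence_new} governs the first and third blocks, and that the corollary merely re-attaches the second block together with $\+Y$.

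Next I would establish the deterministic resolution by induction along the topological order of $\+V \cup \+U$ guaranteed by recursiveness of $\sM$. Suppose the free coordinates are given and $\+C = \+c$. Processing the observed variables in topological order, consider $V \in \+V$ and assume every predecessor in $\tau(V) \cap (\+V \cup \widetilde{\+U})$ has already been assigned a value. The parent set $\Pau(V) = \Pa^*(V) \setminus \{\Er{V}\}$ consists of global background variables (free, hence known) and observed parents (which precede $V$ and are therefore already determined, whether they lie in $\+C$ or in $\+Y$); thus $\Pau(v)$ is available. If $V = C \in \+C$, then by the monotonicity property (b) of Definition~\ref{def:u-monotonic} the evidence $c = g_C(\er{C}; \Pau(c))$ together with strict monotonicity, hence invertibility, of $g_C$ forces $\er{C} = g_C^{-1}(c; \Pau(c))$, which is the factor $I(g^{-1}_C(c; \Pau(c)) \in \ud u_c)$. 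If $V = Y \in \+Y$, then $\er{Y}$ is itself a free coordinate and $y = g_Y(\er{Y}; \Pau(y))$ is computed forward, giving the factor $I(g_Y(u_Y; \Pau(y)) \in \ud y)$. The DAG structure guarantees that this recursion never stalls and yields a unique assignment.

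Finally I would assemble the pieces: multiplying the conditional density $p(\widetilde{\+u}, \overline{\+u}_{\+V\setminus\+C} \cond \+c)$ from Lemma~\ref{lem:unobserved-given-evidence_new} by the two products of indicators is precisely the disintegration of the joint conditional law of $(\widetilde{\+U}, \overline{\+U}, \+Y) \cond (\+C = \+c)$, since $\overline{\+U}_{\+C}$ and $\+Y$ contribute no randomness beyond the free coordinates. I expect the only genuinely delicate point to be the self-consistency of this deterministic resolution: one must check that the topological ordering really does make every element of $\Pau(c)$ and $\Pau(y)$ available before it is needed, so that there is no circular dependence, and that $g_C^{-1}(c; \Pau(c))$ is well defined, i.e.\ that $c$ lies in the range $(a,b)$ of $g_C(\uarg; \Pau(c))$ as in Lemma~\ref{lem:dedicated-transformation_new}. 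The conceptual content is merely that the error terms of the conditioning variables are pinned by the evidence rather than free, which is exactly what separates $\overline{\+U}_{\+C}$ from $\overline{\+U}_{\+V\setminus\+C}$.
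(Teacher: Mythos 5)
Your proposal is correct and matches the paper's intent: the paper gives no explicit proof of this corollary, merely asserting that it follows by ``combining the results above'' (Lemmas~\ref{lem:dedicated-transformation_new} and \ref{lem:unobserved-given-evidence_new}), and your argument is precisely that combination, with the deterministic resolution of $\overline{\+U}_{\+C}$ and $\+Y$ along the topological order spelled out explicitly. The one delicate point you flag---that $c$ must lie in the range of $g_C(\uarg;\Pau(c))$---is already absorbed by Lemma~\ref{lem:dedicated-transformation_new}, whose density vanishes outside that range, so nothing further is needed.
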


\subsection{Algorithms for Conditional Simulation} \label{subsec:conditionalalgorithms} 
Next, we will consider algorithms that simulate observations from a u-monotonic SCM under given conditions (step 1 of Definition~\ref{def:counterfactual}). First, we will present simulation algorithms for the cases with a single conditioning variable. Algorithm~\ref{alg:continuous_condition} considers the case of a continuous conditioning variable and Algorithm~\ref{alg:discrete_condition} the case of a discrete conditioning variable. Algorithm~\ref{alg:multiple_conditions} processes multiple conditioning variables and calls Algorithms~\ref{alg:continuous_condition} and \ref{alg:discrete_condition}. The full workflow of counterfactual inference is implemented later in Algorithm~\ref{alg:simulate_counterfactual}. Algorithm~\ref{alg:evaluate_fairness} applies Algorithm~\ref{alg:simulate_counterfactual} in fairness evaluation.

\subsubsection{Continuous Condition}

We describe the operation of Algorithm~\ref{alg:continuous_condition}. On lines~\ref{line:call_simulatescm_conditional} and \ref{line:call_simulatescm_unconditional}, the procedure \textsc{SimulateSCM} is called. The unconditional call on line~\ref{line:call_simulatescm_unconditional} first simulates $n$ realizations of the background variables from $p(\+u)$ and then applies functions $\+F$ to obtain the values of the observed variables $\+V$. The conditional version on line~\ref{line:call_simulatescm_conditional} is similar, but the values of the variables in $\+D_0$ are not generated but taken directly from $\+D_0$. As a result of line~\ref{line:call_simulatescm_conditional} or line~\ref{line:call_simulatescm_unconditional}, the data matrix $\+D$ contains $n$ rows and the values of all variables in $\+V$ and $\+U$. In practice, it is not necessary to simulate the descendants of $\Er{C}$ on lines~\ref{line:call_simulatescm_conditional} or \ref{line:call_simulatescm_unconditional} because these variables will be updated later.

\begin{algorithm}[!ht]
  \begin{algorithmic}[1]
  \Function{SimulateContinuousCondition}{$n$, $\sM$, $C=c$, $\+D_0$}
  \If{$\+D_0$ exists} 
    \State $\+D \gets$ \Call{SimulateSCM}{$n$, $\mathcal{M}$, $\+D_0$} \label{line:call_simulatescm_conditional}
  \Else
    \State $\+D \gets$ \Call{SimulateSCM}{$n$, $\mathcal{M}$, $\emptyset$} \label{line:call_simulatescm_unconditional}
  \EndIf
  \For{$i=1$ to $n$} \label{line:loop_rows}
    \State $\+D[i,\Er{C}] \gets$ \Call{FindRoot}{$\mathcal{M}$, $\+D[i,]$, $\Er{C}$, $C$, $c$} \label{line:call_findroot}
    \State $\boldsymbol{\omega}[i] \gets \omega_C(\+D[i,\Er{C}])$ \label{line:calculate_weight}
   \EndFor
  \State $\+D \gets$ \Call{Sample}{$\+D$, $n$, $\boldsymbol{\omega}$} \label{line:call_sample}
    \State $\+D \gets$ \Call{SimulateSCM}{$n$, $\mathcal{M}$, $\+D[,\+U]$} \label{line:call_simulatescm_condU}
    \State \Return $\+D$ 
  \EndFunction
  \end{algorithmic}
  \caption{An algorithm for simulating $n$ observations from a u-monotonic SCM $\sM=(\+V,\+U,\+F,p(\+u))$ on the condition that the value of a continuous variable $C$ is $c$. 
  The optional argument $\+D_0$ is an $n$-row data matrix containing the values of some variables $\+V_0 \subset \+V$, $\+U_0 \subset \+U$ that precede $C$ in the topological order and have been already fixed.} 
  \label{alg:continuous_condition}
\end{algorithm}

Starting from line~\ref{line:loop_rows}, the values of the error term $\Er{C}$ in the data matrix $\+D$ are modified so that the condition $C=c$ is fulfilled. The procedure \textsc{FindRoot} on line~\ref{line:call_findroot} uses binary search to find the value $u_i$ of $\Er{C}$ so that
\begin{equation} \label{eq:solve_ui}
f_C(u_i,\+D[i,\Pau(C)]) = c,
\end{equation}
where $f_C \in \+F$ is the function that determines the value $C$ in the u-monotonic SCM $\sM$. Due to the monotonicity assumption, there is at most one value $u_i$ that solves Equation~\eqref{eq:solve_ui}, and if the solution exists it can be found by binary search. If a solution cannot be found, $\+D[i,\Er{C}]$ is set as ``not available''. The binary search is not needed for additive noise models, where $u_i$ can be solved analytically. On line~\ref{line:calculate_weight}, the sampling weights are calculated with function $\omega_C$ defined in Lemma~\ref{lem:dedicated-transformation_new}.   
If $\+D[i,\Er{C}]$ is not available, $\boldsymbol{\omega}[i]$ will be set as $0$.

On line~\ref{line:call_sample}, a re-sample is drawn with the weights $\boldsymbol{\omega}=(\boldsymbol{\omega}[1],\ldots,\boldsymbol{\omega}[n])$ (it is assumed here that at least one of the weights is positive). The sampling is carried out with replacement which leads to conditionally independent but non-unique observations. 
Uniqueness could be achieved by adding low-variance noise to the background variables before line~\ref{line:call_simulatescm_condU} but this would lead to a small divergence from the required condition $C = c$. On line~\ref{line:call_simulatescm_condU}, the observed variables of $\sM$ are updated because the change in $\Er{C}$ affects its descendants.

\subsubsection{Discrete Condition}

Next, we consider the case of a single discrete conditioning variable and describe the operation of Algorithm~\ref{alg:discrete_condition}. 
Lines~\ref{line:discrete_call_simulatescm_conditional} and \ref{line:discrete_call_simulatescm_unconditional} are similar to lines \ref{line:call_simulatescm_conditional} and \ref{line:call_simulatescm_unconditional} of Algorithm~\ref{alg:continuous_condition}, respectively. On line~\ref{line:discrete_call_sample}, $n$ observations are sampled with replacement from those observations that fulfill the target condition $C=c$.
\begin{algorithm}[!ht]
  \begin{algorithmic}[1]
   \Function{SimulateDiscreteCondition}{$n$, $\sM$, $C=c$, $\+D_0$}
  \If{$\+D_0$ exists} 
    \State $\+D \gets$ \Call{SimulateSCM}{$n$, $\mathcal{M}$, $\+D_0$} \label{line:discrete_call_simulatescm_conditional}
  \Else
    \State $\+D \gets$ \Call{SimulateSCM}{$n$, $\mathcal{M}$, $\emptyset$} \label{line:discrete_call_simulatescm_unconditional}
  \EndIf   
	\State $\+D \gets$ \Call{Sample}{$\+D[C=c,]$, $n$} \label{line:discrete_call_sample}
   \State \Return $\+D$
  \EndFunction
  \end{algorithmic}
  \caption{An algorithm for simulating $n$ observations from causal model $\sM=(\+V,\+U,\+F,p(\+u))$ on the condition that the value of discrete variable $C$ is $c$. 
  The optional argument $\+D_0$ is an $n$-row data matrix containing the values of some variables $\+V_0 \subset \+V$, $\+U_0 \subset \+U$ that precede $C$ in the topological order and have been already fixed.}
  \label{alg:discrete_condition}
\end{algorithm}

\subsubsection{Multiple Simultaneous Discrete and Continuous Conditions}

When multiple conditions are present in the counterfactual, sequential calls of Algorithms~\ref{alg:continuous_condition} and \ref{alg:discrete_condition} are needed. Algorithm~\ref{alg:multiple_conditions} presents the required steps where the type of the conditioning variable decides whether Algorithm \textsc{SimulateContinuousCondition} or \textsc{SimulateDiscreteCondition} is called. On lines~\ref{line:call_simulatecontinuouscondition_first} or \ref{line:call_simulatediscretecondition_first}, the data are simulated according to the condition that is the first in the topological order. Starting from line~\ref{line:loop_conditions}, the remaining conditions are processed in the topological order. On line~\ref{line:set_ancestors}, the set $\+S$ contains variables that have already been processed. On lines~\ref{line:call_simulatecontinuouscondition} or \ref{line:call_simulatediscretecondition}, the data are simulated according to each condition in the topological order taking into account that the variables in $\+S$ have been already fixed in $\+D$.

\begin{algorithm}[!ht]
  \begin{algorithmic}[1]
  \Function{SimulateMultipleConditions}{$n$, $\sM$, $\sC$}
    \If{$C_1$ is continuous}
      \State $\+D \gets$ \Call{SimulateContinuousCondition}{$n$, $\sM$, $C_1 = c_1$} \label{line:call_simulatecontinuouscondition_first}
    \Else
      \State $\+D \gets$ \Call{SimulateDiscreteCondition}{$n$, $\sM$, $C_1 = c_1$, $n$} \label{line:call_simulatediscretecondition_first}
    \EndIf
    \For{$k=2$ to $K$} \label{line:loop_conditions}
      \State  $\+S \gets (C_1 \cup \An^*(C_1)) \cup \cdots \cup (C_{k-1} \cup \An^*(C_{k-1}))$ \label{line:set_ancestors}
      \If{$C_k$ is continuous}
        \State $\+D \gets$ \Call{SimulateContinuousCondition}{$n$, $\sM$, $C_k = c_k$, $\+D[,\+S]$} \label{line:call_simulatecontinuouscondition}
      \Else
        \State $\+D \gets$ \Call{SimulateDiscreteCondition}{$n$, $\sM$, $C_k = c_k$, $n$, $\+D[,\+S]$} \label{line:call_simulatediscretecondition}
      \EndIf
    \EndFor
    \State \Return $\+D$ \label{line:return_multiple_data}
  \EndFunction
  \end{algorithmic}
  \caption{An algorithm for simulating $n$ observations from a u-monotonic SMC $\sM=(\+V,\+U,\+F,p(\+u))$ with respect to all continuous variables in $C_1,\ldots,C_K$ under the conditions $\sC = (C_1=c_1) \land \cdots \land (C_K = c_K)$. The topological order of the variables in the conditions is $C_1 < C_2 < \cdots < C_K$. The batch size $n^*=n$ is used in Algorithm~\ref{alg:discrete_condition}.}
  \label{alg:multiple_conditions}
\end{algorithm}

\subsection{Algorithms for Counterfactual Inference and Fairness} \label{subsec:inferencealgorithms}

Next, we present high-level algorithms that use Algorithm~\ref{alg:multiple_conditions} to simulate data from a multivariate conditional distribution. Algorithm~\ref{alg:simulate_counterfactual} simulates observations from a counterfactual distribution. The input consists of a u-monotonic SCM, conditions that define the situation considered, an intervention to be applied, and the number of observations. The algorithm has three steps that correspond to the three steps of the evaluation of counterfactuals in Definition~\ref{def:counterfactual}. On line~\ref{line:call_simulatemultiplecondition}, data are simulated using Algorithm~\ref{alg:multiple_conditions}. On line~\ref{line:call_intervene}, an intervention $\doo(\+X=\+x)$ is applied. In practice, the functions of variables $\+X$ in the SCM are replaced by constant-valued functions. On line~\ref{line:call_simulatescm_counterfactual}, counterfactual observations are simulated from the intervened SCM with the background variables simulated on line~\ref{line:call_simulatemultiplecondition}.

\begin{algorithm}[!ht]
  \begin{algorithmic}[1]
  \Function{SimulateCounterfactual}{$\sM$, $\sC$, $\doo(\+X=\+x)$, $n$}
    \State $\+D_0 \gets$ \Call{SimulateMultipleConditions}{$n$, $\sM$, $\sC$}  \label{line:call_simulatemultiplecondition}
    \State $\sM_{\+x} \gets$ \Call{Intervene}{$\sM$, $\doo(\+X=\+x)$} \label{line:call_intervene}
    \State $\+D \gets$ \Call{SimulateSCM}{$n$, $\sM_{\+x}$, $\+D_0[,\+U]$} \label{line:call_simulatescm_counterfactual}
    \State \Return $\+D$ \label{line:return_counterfactual_data}
  \EndFunction
  \end{algorithmic}
  \caption{An algorithm for simulating $n$ observations from a counterfactual distribution under the conditions $\sC = (C_1=c_1) \land \cdots \land (C_K = c_K)$ in an SCM $\sM=(\+V,\+U,\+F,p(\+u))$ that is u-monotonic with respect to all continuous variables in the set $\{C_1,\ldots,C_K\}$. The topological order of the variables in the conditions is $C_1 < C_2 < \cdots < C_K$.} 
  \label{alg:simulate_counterfactual}
\end{algorithm}

Algorithm~\ref{alg:evaluate_fairness} evaluates the fairness of a prediction model based on Definition~\ref{def:fairness}. The input consists of a prediction model, an SCM, a set of sensitive variables, a case to be evaluated, and the size of the data to be simulated. The prediction model can be an opaque AI model where the user is unaware of the functional details of the model and has only access to the predictions.

\begin{algorithm}[!ht]
  \begin{algorithmic}[1]
  \Function{EvaluateFairness}{$\widehat{Y}(\cdot)$, $\sM$, $\+S$, $\sW \land \sC$, $n$}
    \For{all $\+s$ in $\textrm{dom}(\+S)$} \label{line:loop_values}
      \State $\+D_{\+s} \gets$ \Call{SimulateCounterfactual}{$n$, $\sM$, $\sW \land \sC$, $\doo(\+S=\+s, \sW)$} \label{line:call_simulatecounterfactual}
      \State $\widehat{\+Y}_{\+s} \gets \widehat{Y}(\+D_{\+s})$ \label{line:call_predictionmodel}
    \EndFor
    \State \Return \Call{CheckResponse}{$\{ \widehat{\+Y}_{\+s}\}$} \label{line:call_checkresponse}
  \EndFunction
  \end{algorithmic}
  \caption{An algorithm for evaluating the fairness of a prediction model $\widehat{Y}(\cdot)$ in a u-monotonic SCM $\sM$ with sensitive variables $\+S$ and response variables $\+Y$. The case to be considered is defined by conditions $\sW$ and $\sC$ where $\sW = (W_1=w_1) \land \cdots \land (W_M = w_M)$ denotes the conditions for $\Pa(\+Y) \setminus \+S$ (the non-sensitive observed parents of the responses $\+Y$), and $\sC = (C_1=c_1) \land \cdots \land (C_K = c_K)$ denotes the conditions for some other variables (which may include $\+S$). The argument $n$ defines the number of simulated counterfactual observations.}
  \label{alg:evaluate_fairness}
\end{algorithm}

Algorithm~\ref{alg:evaluate_fairness} loops over the possible values of the sensitive variables (line~\ref{line:loop_values}). If some sensitive variables are continuous, the values to be considered must be specified by the user. On line~\ref{line:call_simulatecounterfactual}, data are simulated for the intervention $\doo(\+S=\+s, \sW)$ that intervenes the sensitive variables but keeps the parents (outside of $\+S$) of the responses fixed to their original values. On line~\ref{line:call_predictionmodel}, the prediction model is applied to the simulated data. On line~\ref{line:call_checkresponse}, the fairness of the obtained predictions is evaluated, i.e., it is checked that all the predictions are the same regardless of the intervention applied to the sensitive variables. Algorithm~\ref{alg:evaluate_fairness} is usually called many times with different conditions $\mathcal{C}$. Algorithms~\ref{alg:continuous_condition}--\ref{alg:evaluate_fairness} are implemented in the R package \texttt{R6causal} 
\if0\blind{
\citepp{R6causal_anonymized}
} \fi
\if1\blind{
\citepp{R6causal}
} \fi
which contains R6 \citepp{R6} classes and methods for SCMs.

Algorithms for alternative counterfactual definitions of fairness could be implemented similarly. For instance, the definition given in Equation~\eqref{eq:kusnerfairness} emerges if the intervention $\doo(\+S=\+s,\sW)$ is replaced by the intervention $\doo(\+S=\+s)$ in Algorithm~\ref{alg:evaluate_fairness}.

\subsection{Conditional Simulation as a Particle Filter} \label{subsec:theoretical}

We show that Algorithm~\ref{alg:multiple_conditions} can be interpreted as a particle filter. This interpretation allows us to study the theoretical properties of the simulated sample. The notation used in this section is conventional for particle filters and differs from the rest of the paper. More specifically, we assume without loss of generality the topological order $V_1 < V_2 < \cdots < V_J$ which implies $\Pa(V_j) \subseteq \{V_1,\ldots,V_{j-1} \}$. We also assume that $\Er{j}$, $j=1,\ldots,J$ is the \errorterm{} of $V_j$. We introduce a shortcut notation $V_{1:j}=\{ V_1,\ldots,V_{j} \}$ and use $V_j^i$ to denote the $i^{\textrm{th}}$ observation of $V_j$. We will first review a basic particle filter algorithm and then show how it corresponds to Algorithm~\ref{alg:multiple_conditions}.

Particle filters are sequential Monte Carlo algorithms, which use sequentially defined `proposal distributions' consisting of the initial distribution $M_0(\ud z_0)$ on $\mathsf{Z}_0$ and transition probabilities $M_j(\ud z_j\mid z_{1:j-1})$ to $\mathsf{Z}_j$, and non-negative `potential functions' $G_j(z_{0:j})$, where $j$ is the time index \citep[cf.][]{del-moral}. 
Algorithm \ref{alg:pf_new} summarizes the basic particle filter algorithm,
\begin{algorithm}
  \caption{\textsc{ParticleFilter}$(M_{0:J}, G_{1:J}, n)$}
  \label{alg:pf_new} 
  \newcommand{\ind}[1]{{#1}}
\begin{algorithmic}[1]
\State
Draw $Z_0^{\ind{i}} \sim M_0(\uarg)$ and set $\vec{Z}_0^{\ind{i}} = Z_0^{\ind{i}}$ for $i=1{:}n$
\For{$j=1,\ldots,J$}
\State Draw $Z_{j}^{\ind{i}} \sim M_{j}(\uarg \mid
          \vec{Z}_{j-1}^{\ind{i}})$ for $i=1{:}n$
\State Calculate weights $W_j^i = \frac{\widetilde{W}_j^i }{ \sum_{j=1}^n \widetilde{W}_j^j}$ where 
$\widetilde{W}_j^i = G_{j}(\vec{Z}_{j-1}^{\ind{i}}, Z_j^{\ind{i}})$ for $i=1{:}n$ \label{step:calculate_weights}
\State  Draw $A_{j}^{\ind{1:n}} \sim \mathrm{Categorical}(
W_j^{1:n})$ and set 
          $\vec{Z}_{j}^{\ind{i}} 
          = (\vec{Z}_{j-1}^{\ind{A_j^{\ind{i}}}}, Z_{j}^{\ind{i}})$ for $i=1{:}n$
          \label{step:resample}
\EndFor
\State \textbf{output}
$\vec{Z}_J^{\ind{1:n}}$
\end{algorithmic}
\end{algorithm}
which outputs (approximate and dependent) samples from the probability distribution $\pi_J(B) = \gamma_J(B)/ \gamma_J(\mathsf{Z}_J)$, where $B \subseteq \mathsf{Z}_J$  and the unnormalised 'smoothing distribution' $\gamma_J$ is defined as follows:
\[
\gamma_J(B) = \int 
   M_0(\ud z_0) \prod_{j=1}^J M_j(\ud z_j \mid z_{1:j-1}) G_j(z_{0:j}) I(z_{0:J}\in B).
\]
For the reader's convenience, we state a mean square error convergence and central limit theorem for particle filter estimates \cite[Propositions 11.2 and 11.3]{chopin2020introduction}.
\begin{theorem}
  \label{thm:pf-conv_new}
Assume that the potential functions are bounded: $\| G_j \|_\infty = \sup_{\vec{z}} G_j(\vec{z}) < \infty$. Then, there exist constants $b,\sigma_h<\infty$ such that for any bounded test function $h:\mathsf{Z}_0\times\cdots\times\mathsf{Z}_J\to\R$ the output of Algorithm \ref{alg:pf_new} satisfies:
\begin{align*}
  \E\bigg[ \frac{1}{n} \sum_{i=1}^n h(\vec{Z}_J^i) - \pi_J(h) \bigg]^2 
  & \le b \frac{\|h\|_\infty^2}{n} , & \text{for all }n\ge 1,\\
  \frac{1}{\sqrt{n}} \sum_{i=1}^n \big[h(\vec{Z}_J^i) - \pi_J(h) \big]
  & \xrightarrow{n\to\infty} N(0,\sigma_{h}^2), & \text{in distribution,}
\end{align*}
where $\pi_J(h)$ stands for the expected value of $h(\+X)$ where $\+X$ follows distribution $\pi_J$.
\end{theorem}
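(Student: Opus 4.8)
The plan is to recognize that Theorem~\ref{thm:pf-conv_new} is the standard $L^2$-convergence and central limit theorem for particle filters in the Feynman--Kac formalism, so that the statement follows directly from the cited Propositions 11.2 and 11.3 of \citett{chopin2020introduction} once their hypotheses are verified. The only substantive condition required by those results is that each potential (weight) function $G_j$ be bounded, which is exactly the assumption $\|G_j\|_\infty < \infty$ in the statement, together with boundedness of the test function $h$. Thus the first step is simply to match notation: identify $M_{0:J}$, $G_{1:J}$, the unnormalised smoothing measure $\gamma_J$, and its normalisation $\pi_J$ with the corresponding objects in the reference, and confirm that Algorithm~\ref{alg:pf_new} is precisely the particle filter to which those propositions apply.

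To indicate why the result holds, I would sketch the standard argument. For the mean square error bound, I would proceed by induction on the time index $j$. At each step the particle approximation error decomposes into a \emph{mutation} error arising from sampling $Z_j^i \sim M_j(\uarg \mid \vec{Z}_{j-1}^i)$ and a \emph{resampling} error from the categorical draw on line~\ref{step:resample}. Conditionally on the particle system at time $j-1$, each of these contributes a variance of order $1/n$; boundedness of $G_j$ controls the normalising ratio $\gamma_j(\mathsf{Z}_j)/\gamma_{j-1}(\mathsf{Z}_{j-1})$ and prevents weight degeneracy, so the accumulated constant $b$ is finite and depends on $J$ and on the products of the bounds $\|G_j\|_\infty$. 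Taking expectations and summing the conditional variances across $j=1,\ldots,J$ yields the $b\|h\|_\infty^2/n$ bound, uniformly in $n \ge 1$.

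For the central limit theorem, I would write the normalised error $n^{-1/2}\sum_{i=1}^n [h(\vec{Z}_J^i) - \pi_J(h)]$ as a telescoping sum over $j$ of terms that are martingale differences with respect to the filtration generated by the particle system up to each resampling stage. Boundedness of $h$ and of the potentials ensures that the conditional Lindeberg condition holds and that the sum of conditional variances converges to a finite limit $\sigma_h^2$, equal to the accumulation of the stagewise variance contributions. A triangular-array martingale central limit theorem then delivers convergence in distribution to $N(0,\sigma_h^2)$.

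The main obstacle, and the reason the result is nontrivial, is the dependence among particles introduced by the resampling step: the $\vec{Z}_J^i$ are \emph{not} independent, so neither a naive law of large numbers nor an independent-sum CLT applies directly. Handling this requires the Feynman--Kac martingale decomposition and the uniform-in-$n$ control of the propagated variance, which is exactly where boundedness of the potentials $G_j$ is used. Since \citett{chopin2020introduction} carry out precisely this analysis, the cleanest route is to invoke their propositions after the hypothesis check above, rather than reproducing the full induction.
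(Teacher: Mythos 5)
Your proposal matches the paper exactly: the paper gives no proof of its own but states the result as a direct quotation of Propositions 11.2 and 11.3 of \citet{chopin2020introduction}, whose only hypothesis is the boundedness of the potentials, just as you identify. Your additional sketch of the induction and martingale decomposition is a faithful outline of the standard argument behind those propositions, but the operative step in both your write-up and the paper is simply to invoke the cited results.
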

Theorem \ref{thm:pf-conv_new} is stated for bounded test functions only. It is possible to prove similar results for unbounded functions \citep[e.g.,][and references therein]{cappe-moulines-ryden,rohrback-jack}.

In order to cast Algorithm~\ref{alg:multiple_conditions} as a particle filter, we need the following ingredients:

\begin{enumerate}[(a) ]
\item Initial particle states contain the global background variables $Z_0 = \widetilde{\vec{U}}$,
  \item Initial distribution $M_0(\ud  \widetilde{\vec{u}}) = p(\widetilde{\vec{u}}) \ud \widetilde{\vec{u}}$ on $\mathsf{Z}_0 = \mathrm{dom}(\widetilde{\+U})$, 
  \item The particle states for $j \geq 1$ contain the observed variables and their \errorterm{}s $Z_j = (V_j, \Er{j})$,
  \item If $V_j \in \+C$ then the value of $V_j$ is set to be $v_j$,
  \item Proposal distributions on $\mathsf{Z}_j = \R^2$:
  \begin{enumerate}[(i) ]
      \item If $V_j$ is continuous and $V_j \notin \+C$: 
\[    
    M_j(\ud v_j, \ud \er{j} \cond \widetilde{\vec{u}}, \overline{\+u}_{1:j-1}, \+v_{1:j-1}) = p(\er{j})\ud \er{j} \,I( g_{V_j}(\er{j};\Pau(v_j)) \in \ud v_j),
\]
    \item If $V_j$ is continuous and $V_j \in \+C$: 
    \[
    M_j(\ud v_j, \ud \er{j} \cond \widetilde{\vec{u}}, \overline{\+u}_{1:j-1}, \+v_{1:j-1}) = I( g_{V_j}^{-1}(v_j;\Pau(v_j)) \in \ud u_{j})I(v_j \in \ud v_j),
        \]
 \item If  $V_j$ is discrete:
\[    
    M_j(\ud v_j, \ud \er{j} \cond \widetilde{\vec{u}}, \overline{\+u}_{1:j-1}, \+v_{1:j-1}) = p(\er{j})\ud \er{j},
\]
  \end{enumerate} 
    \item Potentials: 
  \begin{enumerate}[(i) ]
    \item If $V_j \notin \+C$: $G_j \equiv 1$,
    \item If  $V_j \in \+C$ and $V_j$ is continuous: 
\[
G_j(\widetilde{\vec{u}}, \overline{\+u}_{1:j}, \+v_{1:j}) = 
    \begin{cases}
    \displaystyle
      \frac{\xi_{V_j}(u_{j})}{g_{V_j}'(u_{j}; \Pau(v_j))}, & \text{if }v_j \in \mathrm{dom}\big(g_{V_j}(u_{j}; \Pau(v_j))\big), \\
      0, & \text{otherwise}.
    \end{cases}
 \]    
 \item If $V_j \in \+C$ and $V_j$ is discrete:
 \[
G_j(\widetilde{\vec{u}}, \overline{\+u}_{1:j}, \+v_{1:j}) = 
    \begin{cases}
    \displaystyle
      1, & \text{if }I(v_j \in \ud v_j), \\
      0, & \text{otherwise}.
    \end{cases}
 \]    
 
  \end{enumerate}
\end{enumerate}

 We conclude that Algorithm~\ref{alg:multiple_conditions} is a particle filter algorithm and therefore Theorem~\ref{thm:pf-conv_new} is applicable:
\begin{corollary}
Let $\sM=(\+V,\+U,\+F,p(\+u))$ be a u-monotonic SCM and obtain a sample $\+D$ by calling $\+D = \Call{SimulateMultipleConditions}{n, \sM, \sC}$, where $\sC$ is a set of conditions on $\+C \subset \+V$. Further, let $h: \mathrm{dom}(\+V) \rightarrow \R$ be a bounded function and let $\pi_J$ be the conditional distribution of $\+V$ given $\sC$. If the conditional density of \mbox{$V_j \cond  \Pau(V_j)=\Pau(v_j)$} is bounded for all $V_j \in \+C$ then  the mean square error and central limit theorem as in Theorem \ref{thm:pf-conv_new} hold for $h(\+D)$ and $\pi_J$.
\end{corollary}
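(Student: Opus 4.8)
The plan is to prove the corollary by a direct reduction to Theorem~\ref{thm:pf-conv_new}: I would verify that a run of \textsc{SimulateMultipleConditions} coincides, trajectory by trajectory, with \textsc{ParticleFilter} instantiated by the ingredients (a)--(f) listed above, and that the boundedness hypothesis of Theorem~\ref{thm:pf-conv_new} on the potentials is implied by the stated boundedness of the conditional densities. The output rows of $\+D$ are the $\+V$-coordinates of the terminal particles $\vec{Z}_J^{1:n}$, and the test function $h$, being a bounded function of $\+V$, is in particular a bounded function on the full state space $\mathsf{Z}_0\times\cdots\times\mathsf{Z}_J$. Hence, once the identification and the boundedness check are in place, the mean square error bound and the central limit theorem transfer verbatim.

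First I would confirm that the smoothing target $\pi_J$ of the particle filter defined by (a)--(f) is exactly the conditional distribution of $\+V$ given $\sC$. Writing out $\gamma_J$ for these ingredients, the product of the proposal kernels $M_j$ and the potentials $G_j$ telescopes into the joint law described in Corollary~\ref{cor:all-unobserved_new}: the contribution $M_0(\ud\widetilde{\+u})=p(\widetilde{\+u})\ud\widetilde{\+u}$ together with the proposals of type (i) and (iii) supplies the density $p(\widetilde{\+u})\prod_{U\in\overline{\+U}_{\+V\setminus\+C}}p(u)$; each conditioned variable contributes its deterministic error term $\er{C=c}=g_C^{-1}(c;\Pau(c))$ through proposal (ii) together with the density factor $\omega_C(\er{C=c})$ supplied by the continuous potential; and the remaining $V_j$ are propagated deterministically by $g_{V_j}$. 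After normalisation this is precisely $p(\+v\cond\sC)$ by Lemma~\ref{lem:unobserved-given-evidence_new} and Corollary~\ref{cor:all-unobserved_new}, so $\pi_J$ is the desired conditional distribution.

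Next I would match the algorithmic primitives to the particle filter recursion. The \textsc{SimulateSCM} calls realise the proposals $M_j$: unconditioned continuous and discrete variables are drawn by sampling $\er{j}$ and applying $f_{V_j}$ (proposals (i) and (iii)), while for a continuous conditioning variable the \textsc{FindRoot} step of Algorithm~\ref{alg:continuous_condition} sets $\er{C=c}=g_C^{-1}(c;\Pau(c))$, realising proposal (ii). The weight assignment on line~\ref{line:calculate_weight} is the continuous potential $G_j=\omega_C(\er{C=c})$, and the \textsc{Sample} call on line~\ref{line:call_sample} is the categorical resampling step; the subsequent re-simulation of descendants reflects the deterministic propagation of the resampled background variables through the structural equations, that is, proposals (i) and (iii) at later time steps. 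For a discrete conditioning variable, retaining the rows with $C=c$ in Algorithm~\ref{alg:discrete_condition} is equivalent to the $0/1$ potential followed by resampling. For non-conditioning variables $G_j\equiv 1$, so resampling is uniform and may be omitted, which is why \textsc{SimulateMultipleConditions} only invokes the conditioning subroutines; processing the conditions in topological order exactly reproduces the time ordering $V_1<\cdots<V_J$ of the filter.

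Finally I would verify $\|G_j\|_\infty<\infty$ for every $j$. For non-conditioning variables and for discrete conditioning variables the potentials take values in $\{1\}$ and $\{0,1\}$ respectively and are trivially bounded. For a continuous conditioning variable $V_j\in\+C$, Lemma~\ref{lem:dedicated-transformation_new} identifies $G_j(\cdot)=\omega_{V_j}(\er{V_j=v_j})$ with the conditional density $p(v_j\cond\Pau(v_j))$, so the hypothesis that this density is bounded is exactly $\|G_j\|_\infty<\infty$. With all potentials bounded, Theorem~\ref{thm:pf-conv_new} applies and delivers the two claimed conclusions for $h(\+D)$ and $\pi_J$. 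The main obstacle is the bookkeeping of the telescoping in the second paragraph, namely checking that the deterministic error-term substitution in proposal (ii) combined with the Jacobian factor in the potential faithfully reproduces the change of variables from Lemma~\ref{lem:dedicated-transformation_new}, so that $\gamma_J$ collapses to the joint law of Corollary~\ref{cor:all-unobserved_new}; the remaining correspondences are routine once this step is pinned down.
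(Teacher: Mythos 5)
Your proposal is correct and follows essentially the same route as the paper: the paper's own proof consists only of your final step, namely observing via Lemma~\ref{lem:dedicated-transformation_new} that the nontrivial potentials equal the conditional densities $p(v_j\cond\Pau(v_j))$, so the assumed boundedness of these densities gives $\|G_j\|_\infty<\infty$ and Theorem~\ref{thm:pf-conv_new} applies. The identification of \textsc{SimulateMultipleConditions} with the particle filter and of $\pi_J$ with the conditional law, which you verify explicitly, is carried out in the paper in the surrounding text (ingredients (a)--(f)) rather than inside the proof, so your write-up is simply a more self-contained version of the same argument.
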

\begin{proof}
According to Theorem~\ref{thm:pf-conv_new}, the potential functions must be bounded. The potential functions for Algorithm~\ref{alg:continuous_condition} are   $\xi_{V_j}(u_{j})/{g_{V_j}'(u_{j}; \Pau(v_j))}$ which according to Lemma~\ref{lem:dedicated-transformation_new} give the conditional density of $V_j \cond  \Pau(V_j)=\Pau(v_j)$. 
\end{proof} 
  
We conclude the section with some remarks about possible extensions and elaborations of the algorithm. Algorithm~\ref{alg:pf_new} is similar to the seminal algorithm by \citett{gordon-salmond-smith}, which can be elaborated in a number of ways so that the algorithm remains (asymptotically) valid. The resampling indices $A_j^{1:n}$ in step~\ref{step:resample} need not be independent, as long as they form an unbiased `sample' from the categorical distribution \citep[e.g.,][and references therein]{douc2005comparison,chopin-singh-soto-vihola}. Furthermore, resampling can be (sometimes) omitted, and replaced by cumulation of the weights, and this can be done in an adaptive fashion \citepp{liu-chen-blind}. As a special case of adaptive resampling, we could omit the resampling on line~\ref{line:call_sample} of Algorithm~\ref{alg:continuous_condition} entirely and return a weighted sample instead. In Algorithm~\ref{alg:multiple_conditions}, the weights would be multiplied after processing each condition and the resampling would be carried out only as the final step. However, this approach may not work well with multiple conditions because, on line~\ref{line:call_simulatecontinuouscondition} of Algorithm~\ref{alg:multiple_conditions}, the current data $\+D[,\+S]$ would mostly contain observations with low weights. Resampling after each condition makes sure that new data are simulated only for relevant cases.  

The particle filter is known to be often efficient, if we are interested in the variables $Z_J$ only, or $Z_{J-\ell}$ for moderate lag $\ell$. However, as we care also about $Z_0$, large $J$ often causes the samples to contain only a few or one unique value for $Z_0$. This is a well-known problem, which is referred to as `sample degeneracy' \citepp{li2014fight}. Algorithm~\ref{alg:multiple_conditions} remains effective for an SCM with a high total number of variables, as long as the number of conditioning variables $J$ remains moderate.

\section{Simulations} \label{sec:simulations}

The critical part of Algorithm~\ref{alg:simulate_counterfactual} is the quality of the sample returned by Algorithm~\ref{alg:multiple_conditions}. In this section, the performance of Algorithm~\ref{alg:multiple_conditions} is studied using randomly generated linear Gaussian SCMs. Importantly, we can analytically derive the true distribution for comparison in this particular scenario, see Online Appendix 2 for the details.

In the simulation experiment, we generate linear Gaussian SCMs with random graph structure and random coefficients, apply Algorithm~\ref{alg:multiple_conditions}, and compare the simulated observations with the true conditional normal distribution.
The parameters of the simulation and the performance measures are explained in Online Appendix 3.

The key results are summarized in Table~\ref{tab:simulation_results}. The sample sizes in the simulation were $1000$, $10000$, $100000$, and $1000000$. For each setting, there were $1000$ simulation rounds. Case~A with five observed variables and one condition is expected to be an easy task. Cases B and C with ten observed variables represent moderate difficulty, and cases D and E with fifty observed variables are expected to be challenging. 

\begin{table}[!!ht]
\caption{\label{tab:simulation_results}
The results of the simulation experiment. The first panel shows the SCM parameters: the number of observed variables $|\+V|$, the number of conditions, the expected number of neighbors for the observed variables, and the expected number of global background variables per observed variable. The second panel reports the performance measures for cases A--E with different sample sizes $n$. The measures are the percentage of unique observations out of the total of $n$ observations (column `uniq. \%', ideally 100\%), the mean of sample means of standardized variables ($\overline{\overline{z}}$, ideally 0) and its minimum and maximum, the mean of sample standard deviations ($\overline{s}_z$, ideally 1) and its minimum and maximum, the average of Kolmogorov-Smirnov statistics (K-S, ideally 0), and the average difference between the true and the sample correlation coefficients (diff. cor., ideally 0).} 

\vspace{0.5cm}
\centering
\footnotesize
\begin{tabular}{lcccc}
\multicolumn{5}{c}{\textbf{Simulation cases and their parameters}} \\
Case & $|\+V|$ & conditions & mean neighbors & mean $|\widetilde{\+U}|/|\+V|$\\
 A  &  5  &  1  &  3  &  0 \\ 
 B  &  10  &  4  &  5  &  1 \\ 
 C  &  10  &  9  &  5  &  1 \\ 
 D  &  50  &  2  &  5  &  1 \\ 
 E  &  50  &  9  &  7  &  1 
\end{tabular}
\\[0.4cm]
\begin{tabular}{lrcrccccr}
\multicolumn{9}{c}{\textbf{Simulation results}} \\
Case & $n$ & uniq. \% & \multicolumn{1}{c}{$\overline{\overline{z}}$} & $\overline{z}$: min,max & $\overline{s}_z$ & $s_z$: min,max & K-S & diff. cor.  \\
  A  &  1000  &  49  &  $0.00$  &  ($-0.32$,\,$0.49$)  &  1.00  &  (0.00,\,1.13)  &  0.05  &  $0.00$ \\ 
 A  &  10000  &  49  &  $-0.00$  &  ($-0.84$,\,$0.09$)  &  1.00  &  (0.74,\,1.10)  &  0.02  &  $0.00$ \\ 
 A  &  100000  &  49  &  $0.00$  &  ($-0.02$,\,$0.04$)  &  1.00  &  (0.97,\,1.03)  &  0.00  &  $-0.00$ \\ 
 A  &  1000000  &  49  &  $0.00$  &  ($-0.02$,\,$0.01$)  &  1.00  &  (0.99,\,1.01)  &  0.00  &  $-0.00$ \\ 
 B  &  1000  &  17  &  $-0.01$  &  ($-3.01$,\,$2.13$)  &  0.95  &  (0.00,\,1.56)  &  0.17  &  $0.00$ \\ 
 B  &  10000  &  18  &  $-0.00$  &  ($-1.24$,\,$1.03$)  &  0.99  &  (0.02,\,1.48)  &  0.06  &  $0.00$ \\ 
 B  &  100000  &  17  &  $-0.00$  &  ($-0.47$,\,$0.25$)  &  1.00  &  (0.60,\,1.29)  &  0.02  &  $0.00$ \\ 
 B  &  1000000  &  16  &  $-0.00$  &  ($-0.17$,\,$0.09$)  &  1.00  &  (0.93,\,1.06)  &  0.01  &  $-0.00$ \\ 
 C  &  1000  &  18  &  $-0.00$  &  ($-1.58$,\,$1.11$)  &  0.97  &  (0.07,\,1.74)  &  0.16  &  --- \\ 
 C  &  10000  &  19  &  $0.00$  &  ($-1.19$,\,$0.86$)  &  1.00  &  (0.55,\,1.40)  &  0.06  &  --- \\ 
 C  &  100000  &  19  &  $0.00$  &  ($-0.19$,\,$0.15$)  &  1.00  &  (0.90,\,1.24)  &  0.02  &  --- \\ 
 C  &  1000000  &  20  &  $0.00$  &  ($-0.04$,\,$0.09$)  &  1.00  &  (0.88,\,1.04)  &  0.01  &  --- \\ 
 D  &  1000  &  14  &  $-0.02$  &  ($-2.41$,\,$2.14$)  &  0.92  &  (0.00,\,1.63)  &  0.19  &  $-0.02$ \\ 
 D  &  10000  &  13  &  $0.00$  &  ($-1.01$,\,$1.53$)  &  0.98  &  (0.00,\,1.67)  &  0.07  &  $-0.00$ \\ 
 D  &  100000  &  14  &  $-0.00$  &  ($-0.99$,\,$0.59$)  &  1.00  &  (0.33,\,1.27)  &  0.02  &  $-0.00$ \\ 
 D  &  1000000  &  13  &  $-0.00$  &  ($-0.22$,\,$0.07$)  &  1.00  &  (0.89,\,1.08)  &  0.01  &  $0.00$ \\ 
 E  &  1000  &  4  &  $0.02$  &  ($-4.04$,\,$4.05$)  &  0.38  &  (0.00,\,1.66)  &  0.60  &  $-0.05$ \\ 
 E  &  10000  &  4  &  $-0.01$  &  ($-3.63$,\,$2.96$)  &  0.67  &  (0.00,\,1.97)  &  0.41  &  $-0.01$ \\ 
 E  &  100000  &  4  &  $0.01$  &  ($-2.98$,\,$2.74$)  &  0.84  &  (0.00,\,2.21)  &  0.25  &  $0.01$ \\ 
 E  &  1000000  &  4  &  $-0.02$  &  ($-1.76$,\,$2.10$)  &  0.95  &  (0.00,\,1.70)  &  0.12  &  $0.01$ 
\end{tabular}
\end{table}
\clearpage

It can be seen that the percentage of unique observations decreases when the setting becomes more challenging. The repeated observations originate from the resampling on line~\ref{line:call_sample} of Algorithm~\ref{alg:continuous_condition}.  In all cases, $\overline{\overline{z}}$, the mean of sample means was unbiased, and the variation of sample means between simulation rounds decreased as the sample size increased. The mean of sample standard deviations was close to the true value $1$ in all cases except in case E where the variation was too small in average. This was reflected also in the Kolmogorov--Smirnov statistics that indicated major differences from the true distribution. 
The correlation coefficients were unbiased or almost unbiased in all cases. The average runtime per a simulation round in case E with $n=1000000$ was 121 seconds. The simulation code is available at 
\url{https://github.com/JuhaKarvanen/simulating_counterfactuals}.

\section{Application to Fairness in Credit-Scoring} \label{sec:fairness}

In this section, we show how Algorithm~\ref{alg:evaluate_fairness} can be used in the fairness analysis of a synthetic scenario where the details of the prediction model are unknown and real data are not available. We consider credit-scoring where the decision to grant a bank loan to a consumer depends on personal data and the amount of the loan. The credit decision is made by an automated prediction model that can be accessed via an application programming interface (API) but is otherwise unknown to the fairness evaluator. The outcome of the prediction model is the default risk. We consider three models:
\begin{enumerate}[A) ]
\item The prediction model has no restrictions for the predictors.
\item The prediction model does not directly use sensitive variables.
\item The prediction model uses only non-sensitive variables that have a direct causal effect on the outcome.
\end{enumerate}
It is expected that the evaluation should indicate that only prediction model C is fair according to Definition~\ref{alg:evaluate_fairness}.

For the fairness evaluation, we constructed an SCM with the causal diagram shown in Figure~\ref{fig:credit}. We consider variables that are similar to those typically present in credit-scoring datasets, such as Statlog (German Credit Data) \citepp{misc_statlog_(german_credit_data)_144}. Here, \textit{default} is the outcome variable that indicates whether the customer will repay the loan or not. The annual income (in euros, continuous), the savings (in euros, continuous), the credit amount (in euros, continuous), type of housing (categorical), level of education (ordinal), the type of job (categorical), the length of employment (in months, discrete) and ethnicity (categorical) have a direct effect on the probability of default. The marital status (categorical), the number of children (discrete), age (in years, continuous), and gender (categorical) have only an indirect effect on the risk of default. Ethnicity and gender are the sensitive variables. The address does not have a causal effect on the risk of default but is included here as a potential proxy of ethnicity. The causal diagram also has five unobserved confounders $U_1,\ldots,U_5$. The \errorterm{}s are not displayed. The detailed structure of the SCM is explained in Online Appendix~4 and the R code for the example is available in the repository \url{https://github.com/JuhaKarvanen/simulating_counterfactuals} in the file \texttt{fairness\_example.R}.

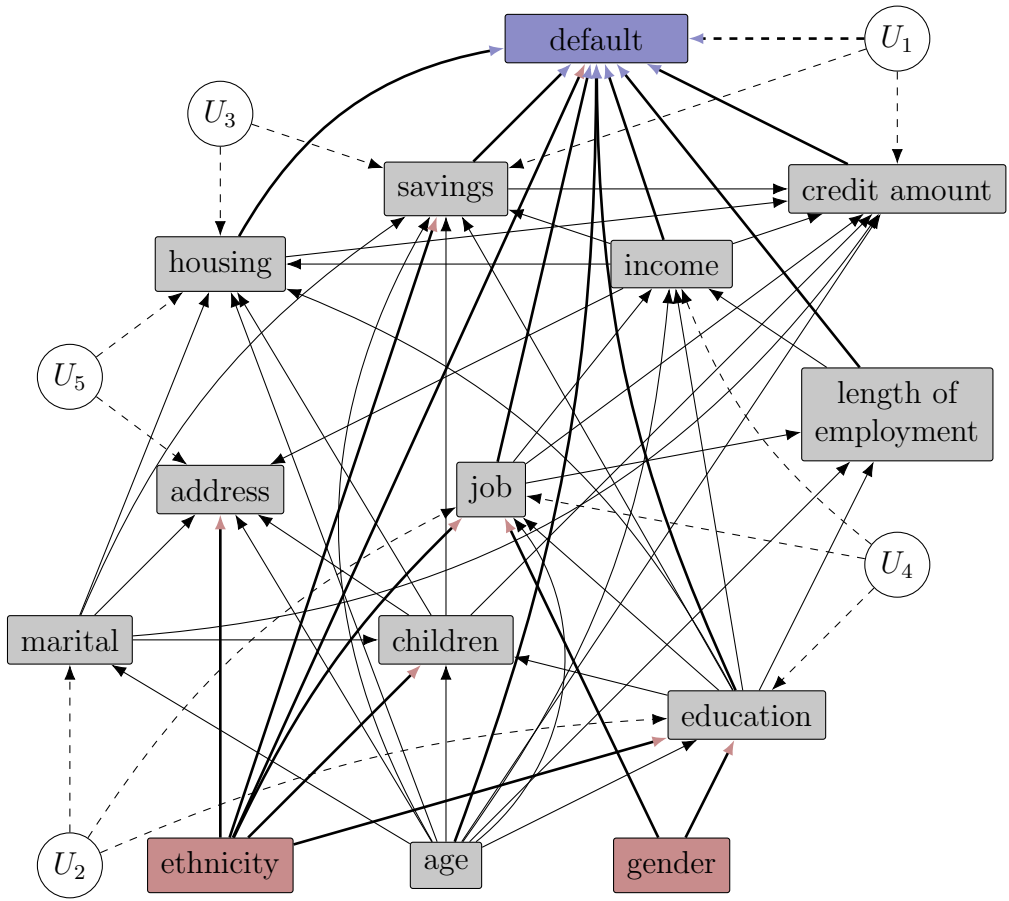
\begin{figure}
\begin{center}
\begin{tikzpicture}[xscale=1,yscale=1] 
  \node [obsout = {default}] at (7,11) {\hphantom{00}default\hphantom{00}};
  \node [obs = {credit_amount}] at (11,9) {credit amount};
  \node [obs = {length_of_employment}, align=center] at (11,6) {length of \\ employment};
  \node [obs = {savings}] at (5,9) {savings};
  \node [obs = {address}] at (2,5) {address};
  \node [obs = {housing}] at (2,8) {housing};
  \node [obs = {income}] at (8,8) {income};
  \node [obs = {job}] at (5.6,5) {job};
  \node [obs = {children}] at (5,3) {children};
  \node [obs = {education}] at (9,2) {education};
  \node [obssens = {gender}] at (8,0) {gender};
  \node [obs = {marital}] at (0,3) {marital};
  \node [obssens = {ethnicity}] at (2,0) {ethnicity};
  \node [obs = {age}] at (5,0) {age};
  \node [lat = {u5}] at (0,6.5) {$U_5 $};
  \node [lat = {u4}] at (11,4) {$ U_4 $};
  \node [lat = {u3}] at (2,10) {$ U_3 $};
  \node [lat = {u2}] at (0,0) {$ U_2 $};
  \node [lat = {u1}] at (11,11) {$ U_1 $};
  \path [->, dashed] (u1) edge (savings);
  \path [->, dashed] (u1) edge (credit_amount);
  \path [->, dashed, draw=outcomecolor, fill=outcomecolor, line width=1pt] (u1) edge (default); 
  \path [->, dashed] (u2) edge (marital);
  \path [->, dashed] (u2) edge [bend left=10] (education);
  \path [->, dashed, bend left=15] (u2) edge (job);
  \path [->, dashed] (u3) edge (housing);
  \path [->, dashed] (u3) edge (savings);
  \path [->, dashed] (u4) edge (education);
  \path [->, dashed] (u4) edge (job);
  \path [->, dashed] (u4) edge [bend left=15] (income);
  \path [->, dashed] (u5) edge (housing);
  \path [->, dashed] (u5) edge (address);
  \path [->] (age) edge (marital);
  \path [->] (age) edge (education);
  \path [->] (age) edge (children);
  \path [->] (age) edge [bend right=45] (job);
  \path [->] (age) edge [bend right=15] (income);
  \path [->] (age) edge (housing);
  \path [->] (age) edge (address);
  \path [->] (age) edge [bend left] (savings);
  \path [->] (age) edge (length_of_employment); 
  \path [->] (age) edge (credit_amount); 
  \path [->, draw=outcomecolor, fill=outcomecolor, line width=1pt] (age) edge [bend right=10] (default); 
  \path [->, draw=sensitivecolor, fill=sensitivecolor, line width=1pt] (ethnicity) edge (education); 
  \path [->, draw=sensitivecolor, fill=sensitivecolor, line width=1pt] (ethnicity) edge (children); 
  \path [->, draw=sensitivecolor, fill=sensitivecolor, line width=1pt, bend left=10] (ethnicity) edge (job); 
  \path [->, draw=sensitivecolor, fill=sensitivecolor, line width=1pt] (ethnicity) edge (address); 
  \path [->, draw=sensitivecolor, fill=sensitivecolor, line width=1pt] (ethnicity) edge (savings); 
   \path [->, draw=sensitivecolor, fill=sensitivecolor, line width=1pt, bend left = 0] (ethnicity) edge (default);
  \path [->] (marital) edge (children);
  \path [->] (marital) edge (housing);
  \path [->] (marital) edge (address);
  \path [->] (marital) edge [bend left=16] (savings);
  \path [->] (marital) edge [bend right=25] (credit_amount);
  \path [->, draw=sensitivecolor, fill=sensitivecolor, line width=1pt] (gender) edge (education);
  \path [->, draw=sensitivecolor, fill=sensitivecolor, line width=1pt] (gender) edge (job);
  \path [->] (education) edge (children);
  \path [->] (education) edge (job);
  \path [->] (education) edge (income);
  \path [->] (education) edge [bend right=20] (housing);
  \path [->] (education) edge (savings);
  \path [->] (education) edge (length_of_employment); 
  \path [->, draw=outcomecolor, fill=outcomecolor, line width=1pt] (education) edge [bend left=12] (default);
  \path [->] (children) edge (housing);
  \path [->] (children) edge (address);
  \path [->] (children) edge (savings);
  \path [->] (children) edge (credit_amount);
  \path [->] (job) edge (income); 
  \path [->] (job) edge (length_of_employment); 
  \path [->] (job) edge (credit_amount); 
  \path [->, draw=outcomecolor, fill=outcomecolor, line width=1pt] (job) edge (default); 
  \path [->] (income) edge (housing); 
  \path [->] (income) edge (address); 
  \path [->] (income) edge (savings); 
  \path [->] (income) edge (credit_amount); 
  \path [->, draw=outcomecolor, fill=outcomecolor, line width=1pt] (income) edge (default); 
  \path [->] (housing) edge (credit_amount); 
  \path [->, draw=outcomecolor, fill=outcomecolor, line width=1pt] (housing) edge [bend left=25] (default);
  \path [->] (savings) edge (credit_amount); 
  \path [->, draw=outcomecolor, fill=outcomecolor, line width=1pt] (savings) edge (default); 
  \path [->] (length_of_employment) edge (income); 
  \path [->, draw=outcomecolor, fill=outcomecolor, line width=1pt] (length_of_employment) edge (default); 
  \path [->, draw=outcomecolor, fill=outcomecolor, line width=1pt] (credit_amount) edge (default); 
\end{tikzpicture}
\caption{Causal model for the fairness of credit-scoring example. Ethnicity and gender (red nodes) are the sensitive variables, and the risk of default (blue node) is the outcome to be predicted. Gray nodes depict other observed variables and white nodes are unobserved confounders.} \label{fig:credit}
\end{center}
\end{figure}

To set up the example, we simulated training data from an SCM corresponding to the causal diagram of Figure~\ref{fig:credit} and fitted prediction models A, B, and C for the default risk using XGBoost \citepp{xgboost,xgboostpackage}. These models are opaque AI models for the fairness evaluator who can only see the probability of default predicted by the models. 

Algorithm~\ref{alg:evaluate_fairness} was applied to prediction models A, B, and C in 1000 cases that were again simulated from the same SCM. In the algorithm, $\widehat{Y}(\cdot)$ was one of the prediction models, $\mathcal{M}$ was the SCM whose causal diagram is depicted in Figure~\ref{fig:credit}, sensitive variables $\+S$ were gender and ethnicity, condition $\mathcal{C}$ contained all observed values of the case, and number of observations was $n=1000$. For each case and each model, the counterfactual probability of default was estimated for all possible combinations of gender and ethnicity. For a fair prediction model, these probabilities should be the same regardless of gender and ethnicity as stated in Definition~\ref{def:fairness}. Consequently, the difference between the largest and the smallest probability was chosen as a measure of fairness. For instance, if the estimated counterfactual probability of default was $0.09$ for one combination of gender and ethnicity and $0.08$ for all other combinations, the counterfactual difference would be $0.09-0.08=0.01$.

The fairness results based on the 1000 cases are presented in Table~\ref{tab:fairness_results}. Only prediction model C was evaluated to be fair. Even if the median counterfactual difference was close to zero for models A and B, there were cases where the difference was very large. Consequently, the use of models A and B in decision-making would potentially lead to illegal discrimination.

\begin{table}[h]
\caption{The fairness results for the credit-scoring models. The first row indicates the percentage of cases where the counterfactual difference was exactly 0, i.e., the fairness criterion held exactly. The second row shows the percentage of cases where the difference was smaller than $0.01$, i.e., the counterfactual probabilities were not necessarily the same but close to each other. The third and fourth rows report the median and maximum of the counterfactual difference, respectively. 
} \label{tab:fairness_results}
\begin{center}
\begin{tabular}{lrrr}
& \multicolumn{3}{c}{Model} \\
Measure & A & B & C\\
\hline
 Zero difference (\%)   &  18  &  27  &  100 \\ 
 Difference $<0.01$ (\%)   &  90  &  94  &  100 \\ 
 Median difference  &  0.00031  &  0.00032  &  0.00000 \\ 
 Maximum difference  &  0.57  &  0.25  &  0.00 
\end{tabular}
\end{center}
\end{table}

It would have been interesting to compare the results of Algorithm~\ref{alg:evaluate_fairness} with other methods of fairness evaluation but this was unattainable. Earlier works on counterfactual fairness \citepp{kusner2017counterfactual,nabi2018fair,chiappa2019path,wu2019pc-fairness,richens2022counterfactual} give some examples but do not provide a code that a user could directly apply to a new problem. On the other hand, \texttt{Fairlearn} \citepp{weerts2023fairlearn}  is a general purpose package for fairness evaluation but it does not currently include counterfactual definitions or metrics.

\section{Discussion} \label{sec:discussion}

We presented an algorithm for simulating data from a specified counterfactual distribution (Algorithm~\ref{alg:simulate_counterfactual}). The algorithm receives a known SCM and a counterfactual of interest  as inputs and simulates independent but non-unique observations from the counterfactual distribution. The algorithm is intended to be a tool for simulation studies on counterfactual inference and fairness evaluation. In Section~\ref{sec:fairness}, we demonstrate how the algorithm can be used in fairness analysis when the prediction models are unknown and real data are not available. 
 
The proposed algorithm can be interpreted as a particle filter and it returns an approximate sample that yields asymptotically valid inference. The algorithm possesses multiple practical strengths. It is applicable to SCMs that may simultaneously have both continuous and discrete variables and may also have unobserved confounders. The algorithm solves the non-trivial problem of conditioning on continuous variables and operates rapidly with SCMs of sizes typically found in the literature. A documented open-source software implementation is available.

The counterfactual simulation algorithm has some requirements and limitations. The knowledge of the full causal model is a strong assumption in practical applications. However, counterfactual distributions cannot in general be simulated, for example, based on data alone, unless the counterfactual distribution in question is identifiable from the available data sources \citepp{shpitser2007counterfactuals,tikka2022identifying}. Our approach could also be combined with a sensitivity analysis where the counterfactual simulation is repeated for different model parameters.

We presented the algorithm for SCMs which are u-monotonic with respect to continuous conditioning variables.
We do not consider this as a serious restriction because the monotonicity requirement concerns only a subset of variables and offers flexibility for the functional form of the \errorterm{}s' impact. Similar but more restrictive monotonicity assumptions have been also used for causal normalizing flows \citepp{javaloy2023causal}. 
In principle, the algorithms could be generalized to work with \errorterm{}s that are \emph{piecewise} monotonic. Then, the expression in Lemma~\ref{lem:dedicated-transformation_new} involves a sum of domain-restricted terms, and the dedicated error terms are no longer uniquely determined, and need to be simulated.

We note that our method can be generalized to multivariate observations (clustered variables) \citepp{tikka2021clustering} with multivariate dedicated error terms. Namely, if the multivariate transformation is a differentiable bijection, we may modify Lemma~\ref{lem:dedicated-transformation_new} using a multivariate transformation formula. However, in this case, the inverse transformation and the Jacobian must be available for point-wise evaluations.

Sample degeneracy is a well-known problem of particle filters that also affects the counterfactual simulation algorithm when the number of conditioning variables increases. The literature on particle filters proposes ways to address the degeneracy problem. One possibility is to run independent particle filters and weigh their outputs, which leads to consistent estimates \citep[cf.][Proposition 23]{vihola-helske-franks-preprint}. This is appealing because of its direct parallelizability but leads to a weighted sample. Another, perhaps even more promising approach, is to iterate the \emph{conditional} particle filter (CPF) \citepp{andrieu-doucet-holenstein}, which is a relatively easy modification of the particle filter algorithm, and which defines a valid Markov chain targeting the desired conditional distribution.

When carefully configured, the CPF can work with other resampling algorithms \citepp{chopin-singh,karppinen-singh-vihola}. CPFs with adaptive resampling have been suggested as well \citepp{lee-phd}. We leave the practical efficiency of the CPF and its variants for future research.

The significance of counterfactual simulation emerges from the context of fairness evaluation. The fairness evaluation algorithm (Algorithm~\ref{alg:evaluate_fairness}) uses  simulated data, which extends the scope of evaluation to encompass data that could be potentially encountered \citepp{cheng2021socially}. We perceive the proposed fairness evaluation algorithm as complementary to methods that are based on real data.  It is important to note that in addition to Definition~\ref{def:fairness}, the simulation algorithm can also be applied to other definitions of counterfactual fairness. Other potential applications of the counterfactual simulation algorithm include counterfactual explanations and algorithmic recourse \citepp{karimi2021algorithmic}. Furthermore, the conditional simulation (Algorithm~\ref{alg:multiple_conditions}) is applicable to Bayesian networks that lack a causal interpretation.

\section*{Acknowledgements}
CSC -- IT Center for Science, Finland, is acknowledged for computational resources. MV was supported by Research Council of Finland (Finnish Centre of Excellence in Randomness and Structures, grant 346311). ST was supported by Research Council of Finland (PREDLIFE: Towards well-informed decisions: Predicting long-term effects of policy reforms on life trajectories, grant 331817).

\bibliographystyle{apalike}
\bibliography{references}

\section*{Online Appendix 1: Simple Illustration of Counterfactual Inference}
Consider the following SCM $\sM$:
\begin{align*}
U_Z &\sim \N(0,1), \\
U_X &\sim \N(0,1), \\
U_Y &\sim \N(0,1), \\
Z &= U_Z,\\
X &= Z + U_X, \\
Y &= X + Z + U_Y,
\end{align*}
where $\+ U= \{U_Z, U_X, U_Y\}$ are unobserved background variables and $Z$, $X$ and $Y$ are observed variables.

The aim is to derive the counterfactual distribution
\[
p(Y_{\doo(X = -1)} =  y \cond Y = 1).
\]
The condition $Y = 1$ is fulfilled if $U_Y + U_X + 2 U_Z = 1$. This equation defines a plane in the space of $(U_Z, U_X, U_Y)$. In the general case, it might not be possible to present the surface of interest in a closed form but for the normal distribution this is doable.

Following \citepp{pearl:book2009}, the solution can be obtained in three steps:
\begin{enumerate}
\item Find the distribution $p(U_Z, U_X, U_Y \cond Y = 1)$.
\item Modify $\sM$ by the intervention $\doo(X = -1)$ to obtain the submodel $\sM_{\doo(X = -1)}$.
\item Use the submodel $\sM_{\doo(X = -1)}$ and the updated distribution $p(U_Z, U_X, U_Y \cond Y = 1)$ to compute the probability distribution of $Y_{\doo(X = -1)}$.
\end{enumerate}

The unconditional joint distribution of $(U_Z, U_Y, Y)$ is a multivariate normal distribution with zero expectations and the covariance matrix
\[
\begin{pmatrix}
 1 & 0 & 2 \\
 0 & 1 & 1\\
 2 & 1 & 6 \\
\end{pmatrix}.
\]
Because $X$ is determined by the intervention $\doo(X = -1)$ and $U_X$ affects only $X$, it suffices to consider only background variables $U_Z$ and $U_Y$.
The distribution of $(U_Z, U_Y)$ conditional on $Y = 1$ is a multivariate normal distribution with the expectation
\[
\begin{pmatrix}
 2 \\
 1 \\
\end{pmatrix} \frac{1}{6} = 
\begin{pmatrix}
 \frac{1}{3}  \\
 \frac{1}{6} \\
\end{pmatrix}
\]
and the covariance matrix
\[
\begin{pmatrix}
 1 & 0 \\
 0 & 1 \\
\end{pmatrix} -
\begin{pmatrix}
 2  \\
 1 \\
\end{pmatrix} \frac{1}{6}
\begin{pmatrix}
 2  & 1 \\
\end{pmatrix} =
\begin{pmatrix}
 \frac{1}{3} & -\frac{1}{3} \\
 -\frac{1}{3} & \frac{5}{6} \\
\end{pmatrix}.
\]

The submodel $\sM_{\doo(X = -1)}$ is the following: 
\begin{align*}
U_Z &\sim \N(0,1), \\
U_X &\sim \N(0,1), \\
U_Y &\sim \N(0,1), \\
Z &= U_Z,\\
X &= -1, \\
Y &= X + Z + U_Y = -1 + U_Z + U_Y.
\end{align*}
The counterfactual distribution of interest, i.e., the distribution of $Y=-1 + U_Z + U_Y$ on the condition of $Y=1$ is a normal distribution with the expectation 
\[
-1 + \frac{1}{3} + \frac{1}{6}  = -\frac{1}{2},
\]
and the variance 
\[
\frac{1}{3} + \frac{5}{6} - 2 \cdot \frac{1}{3} = \frac{1}{2}.
\]
In other words, $Y_{\doo(X = -1)} \cond (Y = 1) \sim \N(-\frac{1}{2},\frac{1}{2})$.

\section*{Online Appendix 2: Counterfactual Inference for Linear Gaussian SCMs}

Consider a linear Gaussian SCM with observed variables $\+V = (V_1,\ldots,V_J)$ and mutually independent background variables $\+U = (U_1,\ldots,U_H)$ that follow the standard normal distribution. The model is written as
\[
\+V = \+b_0 + \+B_1 \+V + \+B_2 \+U,
\]
where $\+b_0$ is a constant vector, $\+B_1$ is a $J \times J$ strictly triangular matrix, and $\+B_2$ is a $H \times H$ matrix. The observed variables can be expressed in terms of the background variables as follows
\[
\+V = (\+I - \+B_1)^{-1}(\+b_0  + \+B_2 \+U),
\]
and because $\+b_0$ is a constant and $\+U \sim \N(\+0,\+I)$ we have $\+V \sim \N(\boldsymbol{\mu}_\+V,\boldsymbol{\Sigma}_{\+V \+V})$ where 
\begin{align*}
 \boldsymbol{\mu}_\+V &= (\+I - \+B_1)^{-1}\+c, \\
 \boldsymbol{\Sigma}_{\+V \+V} &= (\+I - \+B_1)^{-1}(\+B_2 \+B_2^T) ((\+I - \+B_1)^{-1})^T.
\end{align*}

The joint distribution of $\+V$ and $\+U$ is
\[
\begin{pmatrix}
\+V \\
\+U
\end{pmatrix}
\sim
\N \left( 
\begin{pmatrix}
\boldsymbol{\mu}_\+V \\
\+0
\end{pmatrix},
\begin{pmatrix}
\boldsymbol{\Sigma}_{\+V \+V} & \boldsymbol{\Sigma}_{\+V \+U} \\
\boldsymbol{\Sigma}_{\+V \+U}^T & \+I
\end{pmatrix}
\right),
\]
where $\Sigma_{\+V \+U}=(\+I - \+B_1)^{-1}\+B_2$.

Next, we consider the conditional distribution when the values of some observed variables are fixed. We partition the observed variables as $\+V = \+V_1 \cup \+V_2$ such that the values of $\+V_2$ are fixed and write
\[
\begin{pmatrix}
\+V_1 \\
\+V_2 \\
\+U
\end{pmatrix}
\sim
\N \left( 
\begin{pmatrix}
\boldsymbol{\mu}_{\+V_1} \\
\boldsymbol{\mu}_{\+V_2} \\
\+0
\end{pmatrix},
\begin{pmatrix}
\boldsymbol{\Sigma}_{\+V_1 \+V_1} & \boldsymbol{\Sigma}_{\+V_1 \+V_2} & \boldsymbol{\Sigma}_{\+V_1 \+U} \\
\boldsymbol{\Sigma}_{\+V_2 \+V_1} & \boldsymbol{\Sigma}_{\+V_2 \+V_2} & \boldsymbol{\Sigma}_{\+V_2 \+U} \\
\boldsymbol{\Sigma}_{\+V_1 \+U}^T & \boldsymbol{\Sigma}_{\+V_2 \+U}^T & \+I
\end{pmatrix}
\right).
\]
The distribution of $(\+V_1, \+U)^T$ conditional on $\+V_2 = \+c$ is a normal distribution with the mean vector
\begin{equation} \label{eq:conditionalnormalmean}
\begin{pmatrix}
\boldsymbol{\mu}_{\+V_1} \\
\+0
\end{pmatrix} +
\begin{pmatrix}
\boldsymbol{\Sigma}_{\+V_1 \+V_2} \\
\boldsymbol{\Sigma}_{\+V_2 \+U}^T
\end{pmatrix}
\boldsymbol{\Sigma}_{\+V_2 \+V_2}^{-1}  (\+c - \boldsymbol{\mu}_{\+V_2} ),
\end{equation}
and the covariance matrix
\begin{equation} \label{eq:conditionalnormalcov}
\begin{pmatrix}
\boldsymbol{\Sigma}_{\+V_1 \+V_1}  & \boldsymbol{\Sigma}_{\+V_1 \+U} \\
\boldsymbol{\Sigma}_{\+V_1 \+U}^T  & \+I
\end{pmatrix} -
\begin{pmatrix}
\boldsymbol{\Sigma}_{\+V_1 \+V_2} \\
\boldsymbol{\Sigma}_{\+V_2 \+U}^T
\end{pmatrix}
\boldsymbol{\Sigma}_{\+V_2 \+V_2}^{-1}
\begin{pmatrix}
\boldsymbol{\Sigma}_{\+V_1 \+V_2}^T &
\boldsymbol{\Sigma}_{\+V_2 \+U}
\end{pmatrix}
.
\end{equation}

Finally, we derive the distribution after the intervention. The observed variables after the intervention can be expressed as
\[
\+V^\circ = (\+I - \+B_1^\circ)^{-1}(\+b_0^\circ  + \+B_2^\circ (\+U \cond \+V_2 =  \+c) ),
\]
where $\+b_0^\circ$ is obtained from $\+b_0$ by setting the constants of intervened variables to the value of the intervention, $\+B_1^\circ$ and $\+B_2^\circ$ are obtained from $\+B_1$ and $\+B_2$, respectively,  by setting the values in the row vectors of the intervened variables to zero, and the distribution of $\+U \cond \+V_2 = \+c$ is defined by equations~\eqref{eq:conditionalnormalmean} and \eqref{eq:conditionalnormalcov}.

\section*{Online Appendix 3: Details of the Simulation Experiment}

Here we describe additional details of the simulation experiment presented in Section~4 of the main paper . The parameters of the simulation experiment can be divided into the SCM parameters, the counterfactual parameters, and the parameters of Algorithm~3. 
The SCM parameters include the number of observed variables, the average number of neighbouring observed variables per observed variable, the average number of unobserved confounders per observed variable, and the probability distributions from where the coefficients in $\+B_1$, $\+B_2$, and $\+b_0$ (defined in Online Appendix 2) will be drawn. The counterfactual parameters include the number of conditioning variables. The only free parameter of  Algorithm~3 
is the sample size $n$. 

The performance measures of the simulation experiment evaluate the proportion of unique observations in the sample, univariate statistics of observed variables, and the covariance structure between variables. The proportion of unique observations is calculated by dividing the number of unique observations by the sample size. 

The univariate statistics are calculated for an arbitrarily chosen observed variable. The chosen variable is standardized by subtracting the true mean and dividing by the true standard deviation. 
The mean and standard deviation of the standardized variable $z$ should be $0$ and $1$, respectively. In addition, the Kolmogorov-Smirnov statistic measuring the largest difference between the true cumulative distribution function and the empirical cumulative distribution function is reported. The correlation coefficient between two arbitrarily chosen observed variables was compared to the true correlation coefficient.

\section*{Online Appendix 4: Details of the Credit-Scoring Causal Model}

The SCM depicted in Figure~1 of the main paper is explained here variable by variable in a topological order starting from the bottom of the graph. Ethnicity, age and gender do not have parents in the graph and are assumed to be independent from each other. Ethnicity is a categorical variable with with probabilities $0.75$, $0.15$ and $0.10$ for the classes $1$, $2$ and $3$, respectively. Age is uniformly distributed between $18$ and $78$ years. Gender has two values, $0$ and $1$, with equal probabilities. We have deliberately chosen not to label ethnicity and gender classes.

Education is an ordered factor with four levels ($1$ primary, $2$ secondary, $3$ tertiary, $4$ doctorate) and is affected by ethnicity, age, gender, and the unobserved confounders $U_2$ and $U_4$. On average, education is higher for ethnicity class $1$, for gender $0$, and for older individuals. Marital status has three classes $1$ (single), $2$ (married or cohabiting) and $3$ (divorced or widowed) and is affected by age and unobserved confounder $U_2$. The odds of class $2$ compared to class $1$ and the odds of class $3$ compared to class $2$ increase as a function of age. The number of children follows a Poisson distribution whose expectation is affected by ethnicity, age, marital status and education. On average, the number of children is higher for ethnicity classes $2$ and $3$, higher age (up to age $45$), marital status $2$ and $3$ and higher education.

Job is a categorical variable with three classes ($1$ not working and not retired, $2$ working,  $3$ retired) and is affected by ethnicity, age, gender, the number of children, education, and the unobserved confounders $U_2$ and $U_4$. Ethnicity classes 2 and 3, gender $0$, young age and low education increase the probability that a person to the job class $1$. The same features increase the odds of job class $2$ compared to job class $3$. The length of employment (in years) is a continuous variable that is affected by age and education that also determine a technical upper limit for the length of employment. Income is a continuous variable that describes the annual income (in euros) and is affected by job, education, age, the length of employment and unobserved confounder $U_4$. On average, income is higher for working individuals (job class $2$), higher education classes and longer length of employment. For job class $2$, income has its peak at age $58$.

Address is an ordered factor with classes $1,2,\ldots,10$ and is affected by marital status, ethnicity, age, the number of children, income and the unobserved confounder $U_5$. The address variable is thought to be derived from the street address in such a way that higher classes of address are associated with higher income. In addition, ethnicity classes $2$ and $3$ have higher odds for living in an address of class $1$ compared to ethnicity class $1$. Higher age, higher number of children and marital status $2$ and $3$ are associated with higher classes of address.
Housing has two possible values depending whether the home is rented (value $1$) or owned (value $2$). Housing is affected by age, marital status, the number of children, education, income and the unobserved confounders $U_3$ and $U_5$. Higher age, marital status $2$ and $3$, higher number of children, higher education and higher income increase the odds of home ownership.

Savings (in euros) is a continuous variable that is affected by income, age, ethnicity, education, marital status, and the unobserved confounders $U_1$ and $U_3$. Age and income have a joint effect on savings and it is assumed that on average $5$\% of income has been saved every year starting from age $18$. In addition, there is an age-specific reduction to savings that has its peak at age $27$. Higher education, ethnicity class $3$ and marital status $2$ (married or cohabiting) increase the amount of savings. The number of children decrease the amount of savings. Finally, there is a Gamma-distributed multiplier for the savings that depends on the background variables. This multiplier reflects the success of investments and inherited property.

Credit amount (in euros) is a continuous variable that is affected by age, income, job, housing, marital status, the number children, savings and the unobserved confounder $U_1$. The credit amount is the highest at age $40$ and increases as a function of income. On average, the credit amount is higher for individuals who are working, have rented their house, have marital status $2$ or $3$ and have a high number of children. The credit amount decreases as a function of savings. The minimum of credit amount is $5000$ euros.

Default is a binary variable that is affected by ethnicity, age, education, job, the length of employment, income, housing, savings, credit amount and the unobserved confounder $U_1$. Being a member of minority group (ethnicity classes $2$ and $3$) reduces the risk of default. Higher age, higher education, having a job, longer length of employment, higher income, home ownership and high amount of savings also reduce the risk of default. The risk of default increases as a function of credit amount.

The R code for the credit-scoring example is available in the repository \url{https://github.com/JuhaKarvanen/simulating_counterfactuals} in the file \texttt{fairness\_example.R}.

\end{document}